\newlength\FHoffset
\newlength\FHleft
\newlength\FHright
\newbox\FHline
\renewcommand{\floatc@ruled}[2]{\vspace{2pt}{\@fs@cfont #1.\:} #2 \par
 \vspace{1pt}}
\theoremstyle{break}
\newtheorem{Thm}{Theorem}
\newtheorem{theorem}[Thm]{Theorem}
\newtheorem{lemma}[Thm]{Lemma}
\newtheorem{Prop}[Thm]{Proposition}
\theoremstyle{plain}
\newtheorem{Rem}{Remark}
\newtheorem{Property}{Property}
\newtheorem{Def}{Definition}
\newenvironment{proof}{\noindent {\sc Proof:}}{$\Box$ 
        }
 {
        \begin{enumerate}}{\end{enumerate}}
\newcommand{\marginlabel}[1]%
{\mbox{}\marginpar{\it{\raggedleft\hspace{0pt}#1}}}
\newcommand\card[1]{\left| #1 \right|} 
\newcommand\set[1]{\left\{#1\right\}} 
\newcommand{\iprod}[2]{\langle #1, #2 \rangle}
\newcommand{\paren}[1]{\left( #1 \right)}
\newcommand{\good}{\textsf{good}}
\newcommand{\bad}{\textsf{bad}}
\newcommand\cD{\mathcal D}
\newcommand\cP{\mathcal P}
\newcommand\cS{\mathcal S}
\newcommand{\anc}[1]{\mathsf{Anc}^{(#1)}}
\newcommand{\ancl}{B^{(\ell)}}
\newcommand{\anclp}{B^{(\ell)}_+}
\newcommand{\ancPO}[1]{B^{(#1)}_+}
\newcommand{\cPmul}{\cP_{mul}} 
\newcommand{\cPmulPlus}{\cP_{mul+}}
\newcommand{\cPsing}{\cP_{sing}}
\newcommand{\cPsingPlus}{\cP_{sing+}}
\newcommand{\sgn}{\operatorname{sgn}}
\newcommand{\supp}{\operatorname{supp}}
\newcommand{\ffigureh}[4]{\begin{figure}[!h] 
\begin{center}  
\includegraphics*[height=#2]{#1}
\end{center} 
  \caption{#3}\label{#4} 
  \end{figure}}
\newcommand{\Exp}{\mathop{\mathbb E}\displaylimits}
\newcommand{\Var}{\mathop{\mathbb{V}}\displaylimits}
\newcommand\UF{\mathit{UF}}
\newcommand{\One}{\mathbf{1}}
\def\shownotes{0}  
\newcommand{\authnote}[2]{{ $\ll$\textsf{\footnotesize #1 notes: #2}$\gg$}}
\newcommand{\authnote}[2]{}
\newcommand{\Tnote}[1]{{\authnote{Tengyu}{#1}}}
\newcommand{\Snote}[1]{{\authnote{Sanjeev}{#1}}}
\newcommand{\Rnote}[1]{{\authnote{Rong}{#1}}}
\title{Provable Bounds for Learning Some Deep Representations}
\author{Sanjeev Arora\thanks{Princeton University, Computer Science Department and Center for Computational Intractability. Email: arora@cs.princeton.edu. This work is supported by the NSF grants CCF-0832797, CCF-1117309, CCF-1302518, DMS-1317308, and Simons Investigator Grant.} \and Aditya Bhaskara  \thanks{Google Research NYC.  Email: bhaskara@cs.princeton.edu.  The work was done while the author was a Postdoc at EPFL, Switzerland.}  \and Rong Ge\thanks{Microsoft Research, New England. Email: rongge@microsoft.com. Part of this work was done while the author was a graduate student at Princeton University and was supported in part by NSF grants CCF-0832797, CCF-1117309, CCF-1302518, DMS-1317308, and Simons Investigator Grant.} \and Tengyu Ma\thanks{Princeton University, Computer Science Department and Center for Computational Intractability. Email: tengyu@cs.princeton.edu. This work is supported by the NSF grants CCF-0832797, CCF-1117309, CCF-1302518, DMS-1317308, and Simons Investigator Grant.}}
\begin{document}
\maketitle
\date{}




\begin{abstract}

We give  algorithms with provable guarantees that learn a class of deep nets in the generative model view popularized by Hinton and others. 
Our generative model is an $n$ node multilayer neural net that has degree at most $n^{\gamma}$ for some $\gamma <1$ and each edge has a random edge weight in 
$[-1,1]$. Our algorithm learns {\em almost all} networks in this class with polynomial running time. The sample complexity is quadratic or cubic depending upon the details of the model.

The algorithm uses layerwise learning. It is based upon a novel idea of observing correlations among features and using these to infer the underlying edge structure via a global graph recovery procedure. The analysis  of the algorithm reveals interesting structure of  neural networks with random edge weights. \footnote{The first 18 pages of this document serve as an extended abstract of the paper, and a long technical appendix follows. }

\end{abstract}
\section{Introduction}

Can we provide theoretical explanation for the practical success of deep nets?
Like many other ML tasks, learning deep neural nets is NP-hard, 
and in fact seems \textquotedblleft badly NP-hard\textquotedblright  because of many layers of hidden variables connected by nonlinear operations.
Usually one imagines that NP-hardness is not a barrier to provable algorithms in ML because the inputs to the learner are drawn from some simple distribution and are not worst-case. 
 This hope was recently borne out	
in case of generative models such as HMMs,
Gaussian Mixtures, LDA etc., for which learning algorithms with provable guarantees were given~\cite{HKZ12, MV:many_gaussians, HK12, TopicModels, SpectralLDA}. However, 
supervised learning of neural nets even on random inputs 
still seems as hard as cracking cryptographic schemes: this holds for depth-$5$ neural nets~\cite{jackson2002learnability} and  even ANDs of thresholds (a simple depth two network)~\cite{klivans2009cryptographic}. 


However, modern deep nets are not \textquotedblleft just\textquotedblright neural nets (see the survey~\cite{BengioSurvey}).
The underlying assumption is that the  net (or some modification) 
can be run {\em in reverse} to get a {\em generative model} for a distribution 
that  is a close fit to the empirical input distribution. Hinton promoted this viewpoint,
and suggested modeling each level as a Restricted Boltzmann Machine (RBM), which is \textquotedblleft reversible\textquotedblright in this sense. 
Vincent et al.~\cite{DBLP:conf/icml/VincentLBM08} suggested using many layers of a {\em denoising autoencoder}, a generalization of the RBM 
that consists of a pair of encoder-decoder functions
(see Definition~\ref{def:autoencode}). 
These viewpoints allow a different learning methodology than classical backpropagation: {\em layerwise} learning of the net, and in fact {\em unsupervised} learning.
The bottom (observed) layer is learnt in {\em unsupervised fashion} using the provided data. This gives values for the next layer of hidden variables, which are used as the “data” to learn the next higher layer, and so on. 
The final net thus learnt is also a good generative model for the distribution of the bottom layer. In practice
the unsupervised phase is followed by supervised training\footnote{Recent work suggests that
classical backpropagation-based learning of neural nets together with a few modern ideas like convolution and dropout training also performs
very well~\cite{ImageNet}, though the authors suggest that unsupervised pretraining  should help further.}.


This viewpoint of reversible deep nets is more promising for theoretical work
because it involves a generative model, and also seems to get around cryptographic hardness.
But many barriers still remain. There is no known mathematical condition that describes neural nets that are or are not denoising autoencoders. 
 Furthermore, learning even a 
 a {\em single} layer sparse denoising autoencoder seems at least as hard as learning sparse-used
{\em overcomplete dictionaries} (i.e., a single hidden layer with linear operations),  for which there were no provable bounds at all until the very recent manuscript~\cite{AGM}\footnote{The parameter choices 
in that manuscript make it less interesting in context of deep learning, since
the hidden layer is required to have no more than $\sqrt{n}$ nonzeros where $n$ is the size
of the observed layer ---in other words, the observed vector must be highly compressible.}.

The current paper presents both an interesting family of denoising autoencoders as well as
new algorithms to provably learn almost all 
models in this family. 
%
 Our ground truth generative model is a simple multilayer neural net with edge weights in
$[-1,1]$ and simple threshold (i.e., $> 0$) computation at the nodes.  
A $k$-sparse $0/1$ assignment is provided at the top hidden layer,  which is computed upon by 
successive hidden layers in the obvious way until the \textquotedblleft observed vector\textquotedblright  appears
at the bottommost layer. 
If one makes no further assumptions, then the problem of learning the network given samples from the bottom layer
is still harder than breaking some cryptographic schemes. (To rephrase this in autoencoder terminology: our model comes equipped with a {\em decoder} function at each layer.
But this is not enough to  guarantee  an efficient  {\em encoder}  function---this may be tantamount to breaking cryptographic schemes.)


So we make the following additional assumptions about the unknown \textquotedblleft ground truth deep net\textquotedblright (see Section~\ref{sec:prelim}):
(i) Each feature/node activates/inhibits at most $n^{\gamma}$ features at the layer below, and is itself activated/inhibited by at 
most $n^{\gamma}$ features in the layer above, where $\gamma$ is some small constant; in other words the ground truth net is not a complete graph.
(ii) The graph of these edges is chosen
at random and the weights on these edges are random numbers in $[-1,1]$. 


Our algorithm learns {\em almost all} networks in this class very efficiently and with low sample complexity; see Theorem~\ref{thm:icmlmain}. 
The algorithm outputs a network whose generative behavior is statistically indistinguishable from the
ground truth net. (If the weights are discrete, say in $\set{-1,1}$ then it exactly learns the ground truth net.)

Along the way we exhibit interesting properties of such randomly-generated neural nets.  (a) Each pair of adjacent layers constitutes
a denoising autoencoder in the sense of Vincent et al.; see Lemma~\ref{lem:autoencode}. Since the model definition already includes a decoder, this
involves showing the {\em existence} of an encoder that completes it into an autoencoder. 
(b) The encoder is actually the same neural network run 
in reverse by appropriately changing the thresholds at the computation nodes. (c) The reverse computation is stable to dropouts and noise. (d) The distribution generated by a two-layer net cannot 
be represented by {\em any} single layer neural net (see Section~\ref{sec:lowerbounds}), which in turn 
suggests that a random t-layer network cannot be represented by {\em any} $t/2$-level 
neural net\footnote{Formally proving this  for $t>3$ is difficult however since
showing limitations of even 2-layer neural nets is a major open problem in computational complexity theory. Some deep learning papers mistakenly cite an old paper for such a result, but the result that actually exists is far weaker.}.

Note that properties (a) to (d) are {\em assumed} in modern deep net work: for example (b) is a heuristic trick called
\textquotedblleft weight tying\textquotedblright. The fact that they {\em provably} hold for our
random generative model can be seen as some theoretical validation of those assumptions.

\noindent{\bf Context.}  
Recent papers have given theoretical analyses of models with multiple levels of hidden features, including
SVMs~\cite{KernelDeep, OhadDeep}.
However, none of these solves the task  of recovering a ground-truth neural network given its output distribution.

 Though real-life neural nets are not random, our consideration of  random deep networks makes some sense
 for theory. Sparse denoising autoencoders are reminiscent of other objects such as error-correcting codes,
 compressed sensing, etc. which were all first analysed in the random case. 
 As mentioned, provable reconstruction of the hidden layer (i.e., input encoding)  in a {\em known} autoencoder 
 already seems a nonlinear generalization of compressed sensing, whereas even the usual (linear) version of compressed sensing seems 
possible only if the adjacency matrix has 
  ``random-like" properties (low coherence or restricted isoperimetry or lossless expansion). 
 In fact our result that a single layer of our generative model is a sparse denoising autoencoder can be seen as an analog of the fact that random matrices are good for compressed sensing/sparse reconstruction (see Donoho~\cite{donoho2006compressed} for general matrices and  Berinde et al.~\cite{IndykStrauss} for 
 sparse matrices).
 %
  Of course, in compressed sensing the matrix of edge weights is known whereas here it has to be learnt, 
  which is the main contribution of our work.
  Furthermore, we show that our algorithm for learning a single layer of weights can be extended to do layerwise learning of the entire network.
  
  Does our algorithm yield new approaches in practice? We discuss this possibility after sketching our algorithm in the next section.

\section{Definitions and Results}
\label{sec:prelim}

Our generative neural net model (\textquotedblleft ground truth\textquotedblright) has $\ell$ hidden layers of vectors of binary variables $h^{(\ell)}$, $h^{(\ell-1)}$, .., $h^{(1)}$ (where $h^{(\ell)}$ is the top layer) and an observed layer $y$
at bottom. The number of vertices at layer $i$ is denoted by $n_i$, and the set of edges between layers $i$ and $i+1$ by $E_i$.
In this abstract we assume all $n_i =n$; in appendix we allow them to differ.\footnote{When the layer sizes differ the
sparsity of the layers are related by $\rho_{i+1} d_{i+1} n_{i+1}/2 = \rho_{i}n_i$. Nothing much else changes.} (The long technical appendix serves partially as a full version of the paper with exact parameters and complete proofs). 
The weighted graph between layers $h^{(i)}$ and $h^{(i+1)}$ has degree at most $d = n^\gamma$ and all edge weights are in $[-1,1]$.
The generative model works like a neural net where the threshold at every node\footnote{It is possible to allow  these thresholds to be higher and to vary between the nodes,
but the calculations are harder and the algorithm is much less efficient.} is $0$. The top layer $h^{(\ell)}$ is initialized with a $0/1$ assignment where the set of nodes that are $1$ is picked uniformly\footnote{It is possible to prove the result when the top layer has not a random sparse vector and has some bounded correlations among them. This makes the algorithm more complicated.} among all sets of size $\rho_{\ell} n$. For 
$i=\ell$ down to $2$, each node in layer $i -1$ computes a weighted sum of its neighbors in layer $i$, and becomes $1$ iff that sum strictly exceeds $0$. We will use $\sgn(x)$ to denote the threshold function that is $1$ if $x >0$ and $0$ else.
(Applying $\sgn()$ to a vector involves applying it componentwise.) Thus the network computes as follows:  $h^{(i-1)} = \sgn(G_{i-1}h^{(i)})$ for all $i >0$ and
  $h^{(0)} = G_{0}h^{(1)}$ (i.e., no threshold at the observed layer)\footnote{
  It is possible to stay with a generative deep model in which the last layer also has $0/1$ values.
  Then our calculations require the fraction of $1$'s in the lowermost (observed) layer to be at most $1/\log n$. This could be an OK model if one assumes that some handcoded net (or a nonrandom layer like 
  convolutional net) has been used on the real data to produce a sparse encoding, which is the bottom layer of our generative model.
  
  However, if one desires a generative model in which the observed layer is not sparse, then we can do this by allowing real-valued
  assignments at the observed layer, and remove the threshold gates there. This is the model described here.}. Here  $G_i$ stands for both the weighted bipartite graph at a level 
  and its weight matrix.



\ffigureh{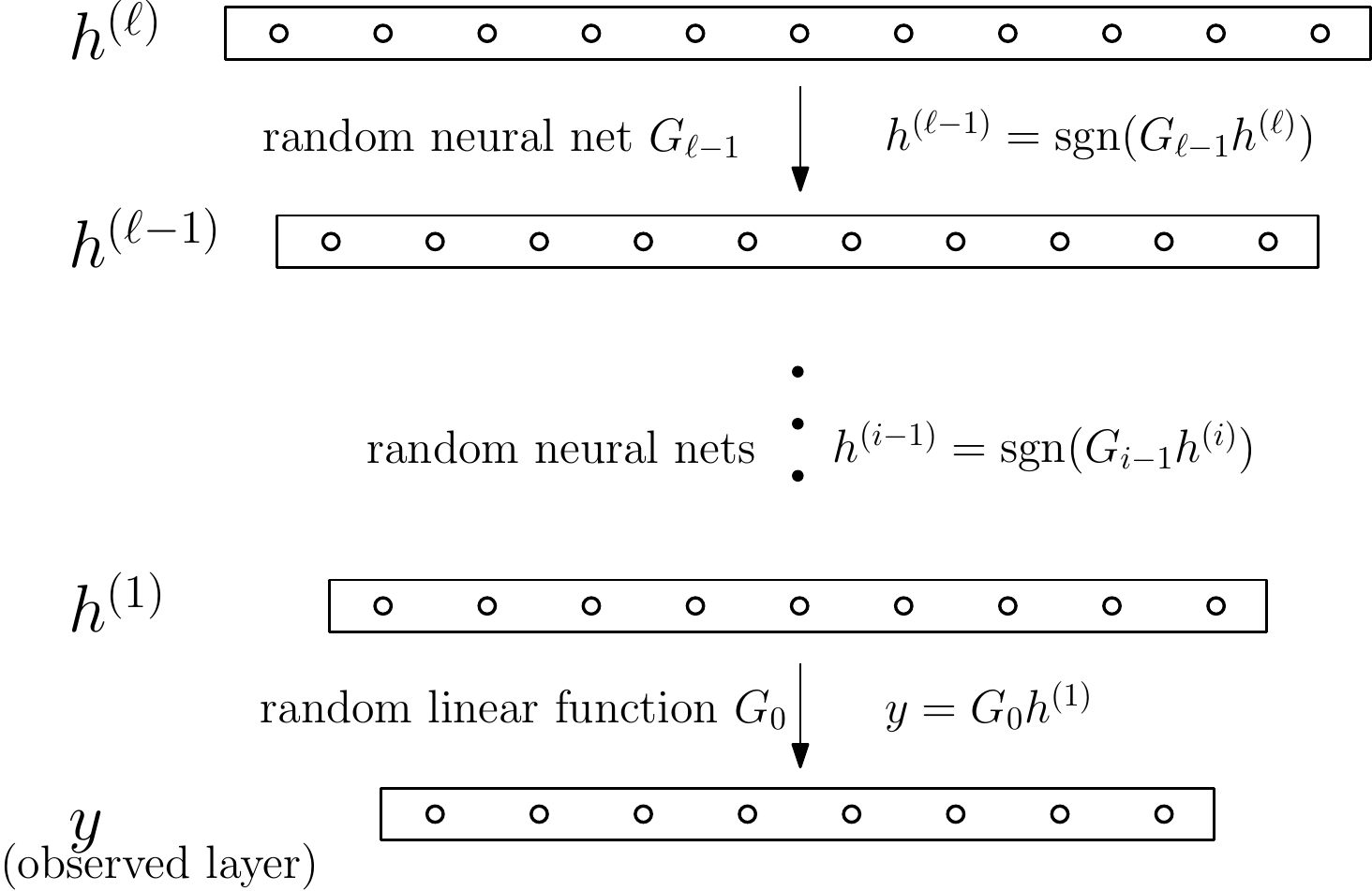}{1.7in}{Example of a deep network}{fig2}

{\em Random deep net assumption:} We assume that in this ground truth the edges between layers are chosen randomly subject to expected degree $d$ being\footnote{In the appendix we allow degrees to be different for different layers.} $n^\gamma$, where
$\gamma < 1/(\ell +1)$, and each edge $e \in E_i$ carries a weight  that is chosen randomly in $[-1,1]$. This is our model
$\mathcal{R}(\ell, \rho_l, \{G_i\})$. We also consider ---because it leads to a simpler and more efficient learner---a model where edge weights are random in $\{\pm 1\}$ instead of $[-1,1]$; this is called $\mathcal{D}(\ell, \rho_{\ell}, \{G_i\})$. Recall that $\rho_{\ell} >0$ is such that the $0/1$ vector input
at the top layer has $1$'s in a random subset of $\rho_{\ell} n$ nodes.

It can be seen that since the network is random of degree $d$, applying a $\rho_{\ell} n$-sparse vector at the top layer is likely to produce the following density of $1$'s (approximately) at the successive layers: $\rho_{\ell} d/2, \rho_{\ell} (d/2)^2$, etc..
We assume the density of last layer $\rho_{\ell} d^\ell/2^\ell = O(1)$. This way the density at the last-but-one layer is $o(1)$, and the last layer is real-valued and dense. 


Now we state our main result. Note that $1/\rho_{\ell}$ is at most $n$. 

\begin{Thm} \label{thm:icmlmain}
When degree $d = n^\gamma$ for $0 < \gamma \le 0.2$, density $\rho_\ell (d/2)^l = C$ for some large constant $C$\footnote{In this case the output is dense}, the network model $\mathcal{D}(\ell, \rho_{\ell}, \{G_i\})$ can be learnt using $O(\log n/\rho_{\ell}^2)$
samples and $O( n^2 \ell)$ time. The network model $\mathcal{R}(\ell, \rho_{\ell}, \{G_i\})$ can be learnt in polynomial time and using $O(n^3 \ell^2\log n/\eta^2)$ samples, where $\eta$ is the statistical distance
between the true distribution and that generated by the learnt model.
\end{Thm}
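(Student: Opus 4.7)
The plan is layerwise learning: first recover the weighted bipartite graph $G_0$ between the observed layer $y$ and $h^{(1)}$, then encode each sample to obtain a sample of $h^{(1)}$, then recover $G_1$, and so on up to $G_{\ell-1}$. The encoding step is justified by Lemma~\ref{lem:autoencode}: once $G_i$ is known, one can compute $h^{(i+1)}$ from $h^{(i)}$ by a thresholded reverse-pass through the same weight matrix (with appropriate thresholds), because adjacent layers form a denoising autoencoder. So the task reduces to learning a single layer from samples of its output and controlling how errors compound as we iterate.

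For a single layer, the key idea is to detect pairs of nodes at the lower layer that share a common parent above by measuring correlations. In the $\mathcal{D}$ model: if $u,v$ at layer $i$ share a parent $p$ at layer $i+1$ that fires (which happens with probability $\approx \rho_{i+1}$), they both receive a $\pm 1$ contribution from $p$, producing a significantly nonzero covariance whose sign equals the product of the known-random edge signs; if they share no parent, independence of edge weights and firings makes $\Exp[y_u y_v]$ of strictly lower order. So I would estimate pairwise products from $O(\log n /\rho_\ell^2)$ samples and threshold to produce a ``shares-a-parent'' predicate on pairs of bottom-layer nodes. For the $\mathcal{R}$ model, where weights can be near zero and hide the signal in pairwise moments, I would instead threshold a triple-correlation $\Exp[y_u y_v y_w]$; the additional variance in the estimator is what produces the extra $n$ factor in the sample complexity.

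Next, a graph-recovery subroutine exploits the random graph structure. With high probability over a random bipartite graph of expected degree $d=n^\gamma$ with $\gamma<1/(\ell+1)$, any two distinct hidden parents have only $o(d)$ common children, so the neighborhood of each hidden node appears as a (nearly unique) maximal clique in the shares-a-parent graph. The subroutine enumerates these cliques, outputs one candidate hidden node per clique together with its neighborhood of children, and therefore reconstructs $G_i$ as an unweighted graph. Edge weights are then either read off by sign tests ($\mathcal{D}$) or estimated from conditional moments over fresh samples ($\mathcal{R}$). Each step is done in $O(n^2)$ time by scanning pairs, giving $O(n^2\ell)$ total for $\mathcal{D}$.

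The step I expect to be the main obstacle is controlling error propagation across layers. The correlation test requires a robust gap between shares-a-parent pairs and unrelated pairs, but after decoding one layer and feeding the results into the next, a small fraction of the ``samples'' of $h^{(i)}$ will be corrupted; I also need to argue that the density $\rho_i$ and effective degree at each layer stay in the regime where the gap analysis works, which is where the assumptions $\gamma \le 0.2$ and $\rho_\ell(d/2)^\ell = C$ enter. The inductive claim is that corruption at layer $i$ is spread uniformly enough (again by randomness of $G_i$) that it does not concentrate on any particular pair of parents, and that standard concentration bounds its impact on the estimated moments at layer $i+1$. Summing these blow-ups across $\ell$ layers gives the stated $\ell$ and $\ell^2$ factors in the sample complexity, completing the argument.
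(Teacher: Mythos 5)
Your high-level plan (layerwise learning, correlation test to infer shared parenthood, a graph-recovery subroutine, then encode and recurse) matches the paper's, but there are several places where the details as you've stated them would not go through, and one of them is the crux of the paper's construction.

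The most important gap is in what the correlation test actually detects. For the hidden layers the variables are $\{0,1\}$-valued, and the test estimates $\Pr[y_u=y_v=1]$. If a shared parent $p$ has a $-1$ edge to $u$, then $p$ firing \emph{suppresses} $y_u$; a $(+,-)$ or $(-,-)$ pair does not produce a detectable bump in $\Pr[y_u=y_v=1]$. So the test reveals only pairs that share a parent via two $+1$ edges; after graph recovery you have recovered only $E_i^+$, not the full support of $G_i$. This breaks the next step of your plan, which assumes ``once $G_i$ is known, encode and move on.'' The paper's key trick (Algorithm PartialEncoder, Lemma~\ref{lem:finding_S}) is precisely that $E^+$ alone suffices to decode: by the strong unique-neighbor property, every firing hidden node has $\gtrsim 0.4d$ unique positive children all forced to $1$, while a non-firing node can accumulate at most $\approx 0.2d$ firing positive children, so thresholding $(E^+)^T y$ at $0.3d$ recovers $h$ exactly. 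This lemma is what lets the pipeline proceed with only partial edge information. You then need an explicit procedure to recover $E_i^-$; ``read off by sign tests'' is not enough. The paper's LearnGraph step does this by looking for samples where $y_v=1$ and exactly one positive parent of $v$ is firing, and ruling out $-1$ edges from the other firing hidden nodes (which would have cancelled the single $+1$ contribution).

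Two smaller but real issues. First, the ``main obstacle'' you name is misidentified: the PartialEncoder decoding is \emph{exact} with high probability per sample, so you do not get a corrupted stream of $h^{(i)}$'s. The actual difficulty in the multilayer case is that for $i<\ell$ the vector $h^{(i)}$ is no longer a uniformly random $\rho_i n$-sparse vector (the top layer is, but intermediate layers are deterministic functions of it), so you must show the pairwise probabilities still separate related from unrelated pairs under this induced distribution — that is what the paper's Lemma~\ref{lem:corrmanylayer} establishes, via the unique-neighbor property and controlling common ancestors. Second, your explanation of the $\mathcal{R}$ sample complexity is off: the three-wise correlation is used (for both models) at the real-valued bottom layer and, in the $\{\pm1\}$ model, to relax the sparsity requirement from $\rho<1/d^2$ to $\rho<1/d^{3/2}$; the $n^3\ell^2\log n/\eta^2$ bound in the $\mathcal{R}$ model instead comes from the final weight-estimation phase, where after PartialEncoder has produced labeled $(h,y)$ pairs one learns the real edge weights (essentially linear classification/regression) to accuracy sufficient to drive the statistical distance down to $\eta$.
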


\vspace*{-0.15in}
\noindent{\bf Algorithmic ideas.} We are unable to analyse existing algorithms. 
Instead, we give new learning algorithms that exploit the very same structure that makes these random 
networks interesting  in the first place i.e., each layer is a denoising autoencoder. 
The crux of the algorithm
is a new twist on 
the old Hebbian rule~\cite{Hebb1949} that ``Things that fire together wire together." In the setting of layerwise learning, this is adapted as follows: ``Nodes in the same layer that fire together a lot are likely to be connected (with positive weight) to the same node at the higher layer."  The algorithm consists of looking for such pairwise (or $3$-wise) correlations and putting together this information globally.
The global procedure boils down to the graph-theoretic problem of reconstructing a bipartite graph given pairs of nodes that are at distance $2$ in it (see Section~\ref{sec:graph_recovery}).
 This is a variant of the GRAPH SQUARE ROOT problem which is NP-complete on worst-case instances but solvable for sparse random (or random-like) graphs. 

Note that current algorithms (to the extent that they are Hebbian, roughly speaking) can also be seen as leveraging correlations. But putting together this information is done via the language of 
nonlinear optimization (i.e., an objective function with suitable penalty terms). 
Our ground truth network is indeed a particular local optimum in any reasonable formulation. It would be interesting to show that
existing algorithms provably find the ground truth in polynomial time but currently this seems difficult. 

Can our new ideas be useful in practice? We think that using a global reconstruction procedure to leverage local correlations
seems  promising, especially if it avoids the usual nonlinear optimization. Our proof currently needs that
the hidden layers are sparse, and the edge structure of the ground truth network is \textquotedblleft random like\textquotedblright (in the sense that two distinct features at a level tend to affect fairly disjoint-ish sets of
features at the next level). Finally, we note that random neural nets do seem useful in so-called {\em reservoir} computing, so perhaps
they do provide useful representational power on real data. 
Such empirical study  is left for future work.


Throughout, we need well-known properties of random graphs with expected degree $d$, such as the fact that they are expanders; these properties appear in the appendix. 
The most important one, unique neighbors property, appears in the next Section.
\section{Each layer is a Denoising Auto-encoder}\label{sec:auto_encoder}

As mentioned earlier, modern deep nets research often assumes that the net (or at least some layers in it) should approximately preserve information, and even allows easy going back/forth between representations in two adjacent layers
(what we earlier called \textquotedblleft reversibility\textquotedblright).
Below, $y$ denotes the lower layer and $h$ the higher (hidden) layer.
Popular choices of $s$ include logistic function, soft max, etc.; we use simple threshold function in our model.
\begin{Def} {\sc (Denoising autoencoder)} \label{def:autoencode}
An {\em autoencoder} consists of a {\em decoding function} $D(h)= s(Wh +b)$ and an {\em encoding function} $E(y) = s(W'y + b')$ where $W, W'$ are linear transformations, $b,b'$ are fixed vectors and $s$ is a nonlinear 
function that acts identically on each coordinate. The autoencoder is {\em denoising}  if $E(D(h) + \eta) = h$ with high probability where $h$ is drawn from the distribution of the 
hidden layer, $\eta$ is a noise vector drawn from the noise distribution, and $D(h) +\eta$ is a shorthand for ``$D(h)$ corrupted with noise $\eta$."
The autoencoder is said to use {\em weight tying} if $W' = W^T$. 
\end{Def} 


 In empirical work the denoising autoencoder property is only implicitly imposed on the deep net 
 by minimizing the reconstruction error $||y - D(E(y +\eta))||$, where $\eta$ is the noise vector. Our definition is slightly different but is actually stronger since
 $y$ is exactly $D(h)$ according to the generative model. Our definition implies the existence of an encoder $E$ that makes the penalty term exactly zero.
We show that in our ground truth net
(whether from model $\mathcal{D}(\ell, \rho_{\ell}, \{G_i\})$ or $\mathcal{R}(\ell, \rho_{\ell}, \{G_i\})$) every pair of successive levels whp satisfies this definition,  
and with weight-tying.

We show a single-layer random network is a denoising autoencoder if the input layer is a random $\rho n$ sparse vector, and the output layer has density $\rho d/2 < 1/20$.

\begin{lemma}\label{lem:autoencode}
If $\rho d < 0.1 $ (i.e., the assignment to the observed layer is also fairly sparse) then the single-layer network above is a denoising autoencoder with high probability (over the choice of the random graph and weights), where the noise distribution 
is allowed to flip every output bit independently with probability $0.1$. It uses weight tying.
\end{lemma}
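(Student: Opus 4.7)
The plan is to exhibit a concrete encoder with weight tying, namely $E(\tilde y) = \sgn(W^T \tilde y - \theta \mathbf{1})$ where $W$ is the same weight matrix used by the decoder $D(h) = \sgn(Wh)$ and $\theta$ is a single threshold chosen around $d/8$. I will then show that, simultaneously for every hidden node $u$, the quantity $(W^T \tilde y)_u$ is above $\theta$ when $h_u = 1$ and below $\theta$ when $h_u = 0$, where $\tilde y = y \oplus \eta$ is the bit-flip-corrupted observation. Let $S = \supp(h)$, so $|S| \approx \rho n$ and $|\supp(y)| \approx \rho d n/2$, both of which are very sparse relative to $n$. I will write $N(u)$ for the observed-layer neighborhood of $u$.

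The first and central step is to invoke the unique-neighbor property alluded to in the preceding paragraph: with probability $1 - \exp(-\Omega(d))$ over the random graph, every hidden node $u$ has at least (say) $0.9\, d$ neighbors $v \in N(u)$ that are not in $N(u')$ for any other $u' \in S$. Call this set $U_u$. For these unique neighbors, the pre-threshold value at $v$ equals $W_{uv} h_u$ alone, so if $h_u = 1$ then $y_v = \mathbf 1[W_{uv} > 0]$, and if $h_u = 0$ then $y_v$ is entirely determined by $W_{u',\cdot}$ for $u' \neq u$ and is independent of $W_{u,v}$.

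The second step is the two-sided concentration calculation. If $h_u = 1$, the contribution of unique neighbors to $(W^T y)_u$ equals $\sum_{v \in U_u} W_{uv} \mathbf 1[W_{uv} > 0]$, a sum of $|U_u| \ge 0.9\, d$ independent nonnegative random variables each bounded by $1$ and with mean $1/4$ (under $W_{uv}$ uniform in $[-1,1]$) or $1/2$ (under $W_{uv} \in \{\pm 1\}$), so by Hoeffding it is at least $d/5$ with probability $1 - \exp(-\Omega(d))$. The remaining non-unique neighbors contribute at most $|N(u) \setminus U_u| \le 0.1 d$ in absolute value. If $h_u = 0$, then because $W_{u,\cdot}$ is independent of $y$ and symmetric around zero, $\sum_{v \in N(u)} W_{uv} y_v$ is a sum of at most $d$ independent mean-zero bounded random variables conditional on $y$, hence concentrates to $O(\sqrt{d \log n}) = o(d)$ by Hoeffding. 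Choosing $\theta$ in the gap separates the two cases.

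The third step handles noise. The perturbation $(W^T \tilde y)_u - (W^T y)_u = \sum_{v \in N(u)} W_{uv}(\tilde y_v - y_v)$ is, conditional on $y$, a sum of at most $d$ independent bounded random variables (one per neighbor) each with mean $0.1 \cdot W_{uv}(1 - 2 y_v)$ and range $[-1,1]$; its mean is at most $0.1 d$ in magnitude and by Hoeffding it deviates from its mean by at most $O(\sqrt{d \log n})$. For the random weight case the \emph{expectation} of this noise is itself zero in $W_{u,\cdot}$, cutting it further, but for the worst case we pick $\theta$ to leave a margin of at least $0.2 d$ on each side, which strictly dominates the noise shift. A union bound over the $n$ hidden nodes finishes the argument.

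The main obstacle is handling the coupling between the graph, the weights, and the support $S$ that together determine whether a neighbor is unique; the cleanest route is to condition in sequence (first on the graph so that the unique-neighbor property holds deterministically for \emph{every} sparse $S$, then on $S$, then use the independent randomness of the $W_{uv}$'s for Hoeffding). A secondary subtlety is showing that the encoder uses a single uniform threshold $\theta$ across all hidden units, which requires the on/off gap to hold uniformly; this follows from the fact that all the concentration bounds above are of the form $\exp(-\Omega(d))$ or $n^{-\omega(1)}$ and $d = n^\gamma \gg \log n$, absorbing the union bound over nodes.
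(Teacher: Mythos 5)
Your plan is essentially the paper's: a weight-tied encoder $\sgn(W^T y - \theta \vec{1})$, correctness via the strong unique-neighbor property, and separating the pre-threshold sums by whether $h_u=1$ or $h_u=0$. The on-case treatment matches. Your off-case is handled differently: you condition on $y$ and exploit that $W_{u,\cdot}$ is independent of $y$ when $u\notin S$ to get a mean-zero $O(\sqrt{d\log n})$ concentration bound, whereas the paper applies the strong unique-neighbor property to $u\notin S$ directly (at most $\epsilon d$ of $u$'s neighbors lie in $F(S)$, hence at most $\epsilon d$ can contribute). Both work; the paper's version gives a one-shot deterministic bound once the UN property is granted, while yours is a slightly more probabilistic route to the same place.

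Two concrete problems. (1) Your constants do not close in the $[-1,1]$-weight case. You establish that for $h_u=1$ the unique-neighbor contribution is at least $d/5$, and non-unique neighbors can subtract up to $0.1d$, so your own on-case lower bound is only $\approx 0.1d$, not $0.4d$; with the off-case at $o(d)$ the gap is $\approx 0.1d$. Your proposed $\theta\approx d/8$ exceeds your own on-case lower bound, and the claimed ``margin of $0.2d$ on each side'' cannot coexist with a $0.1d$ gap. Likewise, a noise mean-shift of up to $0.1d$ would eat the entire gap. The argument is salvageable --- for $h_u=1$ and $v\in U_u$ you have $y_v=\mathbf{1}[W_{uv}>0]$, so the noise term $0.1\,W_{uv}(1-2y_v) = -0.1|W_{uv}|$ averages only to about $-0.045d$, leaving the on-case above $\approx 0.05d$, and a threshold near $0.03d$ works --- but as written the numbers are internally inconsistent. (2) Your aside that ``the cleanest route is to condition on the graph so that the unique-neighbor property holds deterministically for every sparse $S$'' is not available in this parameter regime. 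The paper explicitly notes that with $\rho d\ll n$ the strong UN property does \emph{not} hold for all sets of size $\rho n$; a union bound over all such $S$ would require $\rho n\log n \ll d$, which fails when $d=n^\gamma$ with small $\gamma$. The property is only a high-probability statement for a \emph{fixed} $S$, which is exactly what the lemma's quantification over a random $h$ requires, and which is your first-stated (correct) route.
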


The proof of this lemma highly relies on a property of random graph, called the {\em strong unique-neighbor property.}

For any node $u\in U$ and any subset $S\subset U$ , let $\UF(u,S)$ be the sets of unique neighbors of $u$ with respect to $S$, 
\[\UF(u,S) \triangleq \{v\in V: v\in F(u), v\not\in F(S\setminus \{u\})\}\]
\begin{Property}\label{prop:unique_neighbor}
In a bipartite graph $G(U,V,E,w)$, a node $u\in U$ has $(1-\epsilon)$-unique neighbor property with respect to $S$ if
\begin{equation}
\sum_{v\in \UF(u,S)} |w(u,v)|  \ge (1-\epsilon)\sum_{v\in F(u)} |w(u,v)|
\end{equation}
The set $S$ has $(1-\epsilon)$-strong unique neighbor property if for every $u\in U$, $u$ has $(1-\epsilon)$-unique neighbor property with respect to $S$. 
\end{Property}

When we just assume $\rho d \ll n$, this property does not hold for all sets of size $\rho n$. However, for any fixed set $S$ of size $\rho n$, this property holds with high probability over the randomness of the graph.

Now we sketch the proof for Lemma~\ref{lem:autoencode} (details are in Appendix).For convenience 
assume the edge weights are in $\set{-1,1}$.

 First, the decoder definition is implicit in our 
generative model: $y =\sgn(Wh)$. (That is,
 $b=\vec{0}$ in the autoencoder definition.)
Let the encoder be $E(y) = \sgn(W^T y +b')$ for  $b'= 0.2d \times \vec{1}$.
In other words, the same bipartite graph and different thresholds can transform an assignment on the lower level to the one at the higher level.


To prove this consider the strong unique-neighbor property of the network. For the set of nodes that are $1$ at the higher level, 
%
%
a majority of their neighbors at the lower level are adjacent only to them and to no other nodes that are $1$. The unique neighbors with a positive edge will always be 1 because there are no $-1$ edges that can cancel the $+1$ edge (similarly the unique neighbors with negative edge will always be 0). Thus by looking at the set of nodes 
that are $1$ at the lower level, one can easily infer the correct $0/1$ assignment to the higher level by doing a simple
threshold of say $0.2 d$ at each node in the higher layer.

\section{Learning a single layer network}\label{sec:one-layer}
Our algorithm, outlined below (Algorithm~\ref{alg:highlevel}), learns the network layer by layer starting from the bottom. Thus the key step is that of learning a single layer network, which we now focus on.\footnote{Learning the bottom-most (real valued) layer is mildly different and is done in Section~\ref{sec:lastlayer}.}  This step, as we noted, amounts to learning 
{\em nonlinear }dictionaries with random dictionary elements. The algorithm 
 illustrates how we leverage the sparsity and the randomness of the {\em support graph}, and use pairwise or 3-wise correlations combined with our graph recovery procedure of Section~\ref{sec:graph_recovery}. 
 We first give a simple algorithm and then outline one that works with better parameters.

\begin{algorithm}[!h]
\caption{High Level Algorithm} \label{alg:highlevel}
\begin{algorithmic}[1]
\REQUIRE{samples $y$'s generated by a deep network described in Section~\ref{sec:prelim}}
\ENSURE{the network/encoder and decoder functions}
\FOR{$i = 1$ TO $l$}
\STATE Construct {\em correlation graph} using samples of $h^{(i-1)}$
\STATE Call RecoverGraph to learn the positive edges $E_i^+$
\STATE Use PartialEncoder to encode all $h^{(i-1)}$ to $h^{(i)}$
\STATE Use LearnGraph/LearnDecoder to learn the graph/decoder between layer $i$ and $i-1$.
\ENDFOR
\end{algorithmic}
\end{algorithm}

For simplicity we describe the algorithm when edge weights are $\set{-1, 1}$, and sketch the differences for 
real-valued weights at the end of this section.

 The hidden layer and observed layer each have $n$ nodes, and the generative model assumes the assignment to the hidden layer is  a random $0/1$ assignment with $\rho n$ 
nonzeros. 

Say two nodes in the observed layer are {\em related} if they have a common neighbor in the hidden layer to which they are attached via a $+1$ edge.

\noindent{\sc Step 1:} {\em Construct correlation graph:}
This step is a new twist on the classical Hebbian rule (\textquotedblleft things that fire together wire together\textquotedblright).

\begin{algorithm}[!h]
 \caption{PairwiseGraph}
 \label{alg:pairwise-corr}
\label{alg:pairwisegraph}
\begin{algorithmic}
\REQUIRE{$N = O(\log n/\rho)$ samples of $y = \sgn(Gh)$, }
\ENSURE{$\hat{G}$ on vertices $V$, $u,v$ connected if related}
\FOR{each $u,v$ in the output layer}
	\IF{$\ge \rho N/3$ samples have $y_u=y_v = 1$}
		\STATE connect $u$ and $v$ in $\hat{G}$
	\ENDIF
\ENDFOR
\end{algorithmic}
\end{algorithm}

\noindent{\bf Claim} {\em In a random sample of the output layer, related pairs $u, v$ are both $1$ with probability at least $0.9\rho$, while
unrelated pairs are both $1$ with probability at most $(\rho d)^2$. }

\noindent {\em (Proof Sketch):}
First consider a related pair $u,v$, and let $z$ be a vertex with $+1$ edges to $u$, $v$. Let $S$ be the set of neighbors of $u$, $v$ excluding $z$. The size of $S$ cannot be much larger than $2d$. Under the choice of parameters, we know $\rho d\ll 1$, so the event $h_S = \vec{0}$ conditioned on $h_z = 1$ has probability at least 0.9.
Hence the probability of $u$ and $v$ being both $1$ is at least $0.9\rho$. Conversely, if $u, v$ are 
unrelated then for both $u, v$  to be $1$ there must be two different causes, namely, nodes $y$ and $z$ that are $1$, and additionally, are connected to $u$ and $v$ respectively via $+1$ edges. The chance of such $y,z$ existing in a random sparse assignment is at most $(\rho d)^2$ by union bound.

Thus, if $\rho$ satisfies $(\rho d)^2 < 0.1 \rho$, i.e., $\rho < 0.1/d^2$, then using $O(\log n/\rho^2)$ samples we can recover all related pairs whp, finishing the step.

\noindent{\sc Step 2:} {\em Use {\sc graph recover} procedure to find all edges that have weight $+1$.} (See Section~\ref{sec:graph_recovery}
for details.)

\noindent{\sc Step 3:} {\em Using the $+1$ edges to encode all the samples $y$.}

\begin{algorithm}[!h]
\caption{PartialEncoder}
\label{alg:finding_S}
\label{alg:partialencoder}
  \begin{algorithmic}
  \REQUIRE{positive edges $E^+$, $y = \sgn(Gh)$, threshold $\theta$}
  \ENSURE{the hidden variable $h$}
 \STATE Let $M$ be the indicator matrix of $E^+$ ($M_{i,j} = 1$ iff $(i,j)\in E^+$)
 \STATE {\bf return} $h = \sgn(M^T y - \theta \vec{1})$
  \end{algorithmic}
\end{algorithm}

Although we have only recovered the positive edges, we can use {\sc PartialEncoder} algorithm to get $h$ given $y$! 

\begin{lemma}
If support of $h$ satisfies $11/12$-strong unique neighbor property, and $y = \sgn(Gh)$, then Algorithm~\ref{alg:finding_S} outputs $h$ with $\theta = 0.3d$.
\end{lemma}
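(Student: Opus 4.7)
My plan is to analyze, separately for $u\in S$ and $u\notin S$, the quantity $r_u := (M^T y)_u = \sum_{v\,:\,(u,v)\in E^+} y_v$ computed by \textsc{PartialEncoder}, and to show $r_u>0.3d$ iff $u\in S$. The crucial observation I would exploit is that on a unique neighbor $v\in \UF(u,S)$ the value $y_v$ is completely forced: if $u\in S$, then $v$'s only neighbor in $S$ is $u$, so $(Gh)_v=w(u,v)$ and $y_v=1$ exactly when $(u,v)\in E^+$; if $u\notin S$, then $v$ has no neighbor in $S$ at all, so $(Gh)_v=0$ and $y_v=0$ (using the convention $\sgn(0)=0$).

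I would first handle $u\in S$. Every positive edge from $u$ to a unique neighbor of $u$ contributes $1$ to $r_u$. Writing $P(u)$ for the set of positive edges incident to $u$, standard concentration for random $\pm1$ weights on a graph of expected degree $d\gg\log n$ gives $|P(u)|\ge (1/2-o(1))d$ with high probability, while the $11/12$-strong unique neighbor property guarantees that at most $d/12$ of $u$'s edges are non-unique. Hence the number of positive unique neighbors of $u$ is at least $|P(u)| - d/12 \ge (5/12-o(1))d$, which comfortably exceeds $0.3d=\theta$. Next I would handle $u\notin S$: positive edges to unique neighbors contribute $0$ to $r_u$ (their $y_v$ is $0$), so only the at most $d/12$ non-unique-neighbor edges of $u$ can possibly contribute, giving $r_u\le d/12<\theta$, and \textsc{PartialEncoder} correctly outputs $h_u=0$.

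Once the strong unique neighbor property is granted, the argument is purely combinatorial, and I do not expect any step to be difficult. The only subtleties worth flagging are the $\sgn(0)=0$ convention (which is what makes the $u\notin S$ case work cleanly) and the mild concentration inputs (degree of $u$ close to $d$, number of positive edges close to $d/2$), both of which are standard for the random graph/random $\pm 1$ weight model already set up in the paper, so I would cite them rather than rederive them.
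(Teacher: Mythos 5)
Your proof is correct and matches the paper's argument in all essentials: both proofs observe that the $y$-value at a unique neighbor of $u$ is forced ($1$ if $u\in S$ and the edge is positive, $0$ if $u\notin S$), then count positive unique neighbors ($\approx 5d/12$, the paper says $0.4d$) for $u\in S$ and bound the non-unique contribution ($\le d/12$, the paper says $\le 0.2d$) for $u\notin S$, both falling on opposite sides of $\theta = 0.3d$. You are also right that the degree and positive-degree concentration bounds are inputs the lemma tacitly assumes from the random-graph/$\pm1$-weight setup rather than consequences of the strong unique neighbor property alone.
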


This uses the unique neighbor property: for every $z$ with $h_z = 1$, it has at least $0.4d$ unique neighbors that are connected with $+1$ edges. All these neighbors must be $1$ so $[(E^+)^T y]_z \ge 0.4d$. On the other hand, for any $z$ with $h_z=0$, the unique neighbor property (applied to supp$(h) \cup \{z\}$) implies that $z$ can have at most $0.2d$ positive edges to the $+1$'s in $y$. Hence $h = \sgn((E^+)^T y - 0.3d\vec{1})$.

\noindent{\sc Step 4:} {\em Recover all weight $-1$ edges.} 
\begin{algorithm}
\caption{Learning Graph}\label{alg:finding_neg_edges:single}
\begin{algorithmic}[1]
\REQUIRE positive edges $E^+$,  samples of $(h,y)$
\ENSURE $E^{-}$
\STATE $R \leftarrow (U\times V)\setminus E^+$.
\FOR{each of the samples $(h,y)$, and each $v$}
\STATE Let $S$ be the support of $h$
	\IF{$y_v = 1$ and $S \cap B^+(v) = \{u\}$ for some $u$}
		\FOR{$s\in S$}
			\STATE remove $(s,v)$ from $R$. 
		\ENDFOR
	\ENDIF 
\ENDFOR
\STATE {\bf return} $R$
\end{algorithmic}
\end{algorithm}

Now consider many pairs of $(h,y)$, where $h$ is found using Step 3. Suppose in some sample, $y_u=1$ for some $u$, and exactly one neighbor of $u$ in the $+1$ edge graph (which we know entirely) is in supp$(h)$. Then we can conclude that for any $z$ with $h_z=1$, there cannot be a $-1$ edge $(z,u)$, as this would cancel out the unique $+1$ contribution.

\begin{lemma}
Given $O(\log n/(\rho^2 d))$ samples of pairs $(h,y)$, with high probability (over the random graph and the samples) Algorithm~\ref{alg:finding_neg_edges:single} outputs the correct set $E^-$.
\end{lemma}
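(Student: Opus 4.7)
The plan has two parts: a deterministic \emph{correctness} argument showing the algorithm never removes a true negative edge from $R$, and a probabilistic \emph{completeness} argument showing that $O(\log n/(\rho^2 d))$ samples suffice to remove every non-edge from $R$.

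For correctness, fix any edge $(s,v) \in E^-$ and consider a sample $(h,y)$ on which the algorithm attempts to remove $(s,v)$, i.e.\ $s\in S=\supp(h)$, $y_v=1$, and $S\cap B^+(v)=\{u\}$ for some $u$. Then
\[
\sum_{s'\in S} w(s',v) \;=\; \underbrace{+1}_{\text{from }u} \;-\; |S\cap B^-(v)|,
\]
and since $s\in S\cap B^-(v)$ the right-hand side is $\le 0$, contradicting $y_v=\sgn(\cdot)=1$. So such a sample cannot occur, and every edge of $E^-$ survives in $R$. Combined with the initialization $R\subseteq (U\times V)\setminus E^+$, this shows $R\supseteq E^-$ at termination.

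For completeness, fix any non-edge $(s,v)$ (so $s\notin B^+(v)\cup B^-(v)$). We want one sample in which $s\in S$, $|S\cap B^+(v)|=1$, and $S\cap B^-(v)=\emptyset$; under these conditions $\sum_{s'\in S}w(s',v)=+1$, giving $y_v=1$ automatically, and the algorithm will then delete $(s,v)$. Using the random-graph guarantee $|B^+(v)|,|B^-(v)|=(1\pm o(1))d/2$ and the fact that $S$ is a uniformly random $\rho n$-subset, the three events concern disjoint vertex sets (since $s,B^+(v),B^-(v)$ are pairwise disjoint by assumption). A direct count gives that the probability of this good event per sample is
\[
\Pr[s\in S]\cdot\Pr[|S\cap B^+(v)|=1]\cdot\Pr[S\cap B^-(v)=\emptyset\mid \cdot]\;=\;\Theta\!\left(\rho\cdot \tfrac{d}{2}\rho\cdot(1-\rho)^{d/2}\right)\;=\;\Omega(\rho^2 d),
\]
using $\rho d=o(1)$ to bound the last factor by a constant. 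Hence with $N=C\log n/(\rho^2 d)$ independent samples the probability that $(s,v)$ is never deleted is at most $(1-c\rho^2 d)^{N}\le n^{-C'}$, and a union bound over the at most $n^2$ non-edges yields $R=E^-$ with probability $1-n^{-\Omega(1)}$.

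The main technical obstacle is the completeness lower bound on the per-sample ``good event'' probability. Two mild subtleties arise: (i) the support $S$ has \emph{exactly} $\rho n$ coordinates rather than i.i.d.\ $\rho$-biased ones, so one should replace the heuristic independent calculation above by the standard coupling/binomial-hypergeometric comparison, which only affects constants since $\rho n\gg 1$; and (ii) the degrees $|B^{\pm}(v)|$ are themselves random, so one first conditions on the high-probability event (from Section~\ref{sec:prelim}'s random-graph facts) that every $v$ has $|B^+(v)|,|B^-(v)|=(1\pm o(1))d/2$, on which the bound above holds for every $v$ simultaneously. Everything else is a clean Chernoff/union-bound calculation, and the $\rho d<0.1$ hypothesis from Lemma~\ref{lem:autoencode} ensures the $(1-\rho)^{d/2}=\Omega(1)$ step is legitimate.
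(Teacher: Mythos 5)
Your proposal is correct and follows essentially the same approach as the paper: it bounds the per-sample probability that a fixed non-edge $(s,v)$ is ``witnessed'' (namely $s\in\supp(h)$, $|\supp(h)\cap B^+(v)|=1$, $\supp(h)\cap B^-(v)=\emptyset$) by $\Omega(\rho^2 d)$, then applies a Chernoff/union bound over all non-edges, exactly as the paper does via its Proposition bounding this probability by $\rho^2 d/3$. The only cosmetic difference is that you spell out the soundness direction (no true $E^-$ edge is ever removed, because a $-1$ edge would cancel the lone $+1$ contribution and force $y_v=0$) as a separate labeled step, whereas the paper folds this observation into the algorithm's motivating discussion rather than the lemma's proof.
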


To prove this lemma, we just need to bound the probability of the following event for any non-edge $(x,u)$: $h_x = 1$, $\card{\supp(h) \cap B^+(u)} = 1$, $\supp(h)\cap B^-(u) = \emptyset$ ($B^+,B^-$ are positive and negative parents). These three events are almost independent, the first has probability $\rho$, second has probability $\approx \rho d$ and the third has probability almost 1.


\paragraph{Leveraging $3$-wise correlation:} The above sketch used pairwise correlations to recover the $+1$ weights when $\rho < 1/d^2$, roughly. It turns out that using $3$-wise correlations allow us to find correlations under a weaker requirement $\rho <1/d^{3/2}$. Now call three observed nodes $u, v, s$ 
{\em related} if they are connected to a common node at the hidden layer via $+1$ edges. 
Then we can prove a claim analogous to the one above, which says that for a related triple, the probability that $u,v,s$ are all $1$ is at least $0.9\rho$, while the probability for unrelated triples is roughly at most $(\rho d)^3$.  Thus as long as $\rho < 0.1/d^{3/2}$, it is possible to find related triples correctly. The {\sc graph recover algorithm} can be modified to run on $3$-uniform hypergraph consisting of these
related triples to recover the $+1$ edges.

The end result is the following theorem. This is the learner used to get the bounds stated in our main theorem.
\begin{theorem}
Suppose our generative neural net model with weights $\set{-1, 1}$ has a single layer and the assignment of the hidden layer is a random $\rho n$-sparse
vector, with $\rho\ll 1/d^{3/2}$. Then there is an algorithm that runs in $O(n(d^3+n))$ time and uses $O(\log n/\rho^2)$ samples to recover the ground truth with high probability over the randomness of the graph and the samples. 
\end{theorem}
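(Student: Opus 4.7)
The plan is to run the four-step pipeline of Algorithm~\ref{alg:highlevel} in the $3$-wise variant sketched at the end of Section~\ref{sec:one-layer}, and to verify that each step succeeds with the quoted sample and time bounds in the regime $\rho\ll 1/d^{3/2}$. Call a triple $(u,v,s)$ of observed nodes \emph{related} if they share a common hidden parent via $+1$ edges. The $3$-wise correlation test that replaces Algorithm~\ref{alg:pairwisegraph} simply counts, over $N=O(\log n/\rho^2)$ samples, how often $y_u=y_v=y_s=1$, and marks the triple as related when this count exceeds (say) $\rho N/2$.

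For Step 1, I would bound the two relevant probabilities. For a related triple with common $+1$-parent $z$, condition on $h_z=1$ (probability $\rho$); the union of the remaining parents of $u,v,s$ has size at most $3d$, and the sparse prior together with $\rho d\ll 1$ forces the probability that none of them is on to be at least $0.9$, so $\pr[y_u=y_v=y_s=1]\ge 0.9\rho$. For an unrelated triple there is no single common cause, so a union bound over ordered choices of three distinct $+1$-parents $(a,b,c)$ (one of $u$, one of $v$, one of $s$) all simultaneously on gives an upper bound of order $(\rho d)^3$; under $\rho\ll 1/d^{3/2}$ this is $o(\rho)$. A Chernoff bound together with a union bound over the $O(n^3)$ triples then certifies that $N=O(\log n/\rho)$ samples suffice to correctly classify every triple, and the stated $N=O(\log n/\rho^2)$ bound is slack enough for this step.

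For Steps 2--4, I would feed the resulting $3$-uniform hypergraph of related triples into the {\sc RecoverGraph} routine of Section~\ref{sec:graph_recovery} to reconstruct $E^+$ exactly; this step dictates the $O(nd^3+n^2)$ running time, since on a random expected-degree-$d$ bipartite graph the ground truth produces on the order of $nd^3$ related triples. Once $E^+$ is in hand, the PartialEncoder analysis (Algorithm~\ref{alg:partialencoder} with $\theta=0.3d$) inverts each $y$ to its $h$ in $O(n^2)$ time per sample, invoking the $11/12$-strong unique-neighbor property of $\supp(h)$, which holds with high probability for a random $\rho n$-sparse $h$. Finally, passing the reconstructed $(h,y)$ pairs to Algorithm~\ref{alg:finding_neg_edges:single} recovers $E^-$; the accompanying lemma guarantees correctness from $O(\log n/(\rho^2 d))\le O(\log n/\rho^2)$ samples, matching the overall sample complexity.

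The main obstacle I anticipate is the upper bound on the unrelated-triple probability, since the three events $\{y_u=1\}$, $\{y_v=1\}$, $\{y_s=1\}$ are not independent (they are coupled through shared parents of $u,v,s$ in the random graph). I would handle this by first conditioning on the ground-truth graph satisfying its typical structural properties -- expected degree concentration, unique-neighbor property, and small pairwise codegree (every pair of observed nodes has at most $\tilde{O}(d^2/n)$ common parents) -- and then enumerating the ``witnesses'' for $y_u=y_v=y_s=1$: each triple of distinct on-parents contributes at most $\rho^3$, and the correction from witnesses using fewer than three distinct on-parents is absorbed into the codegree bound. The rare triples lying in an anomalously dense local neighborhood fall inside the exponentially small failure probability of the random-graph properties and are discarded up front.
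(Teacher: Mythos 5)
Your proposal matches the paper's approach: the same four-step pipeline (3-wise correlations to find related triples, {\sc RecoverGraph} on the resulting 3-uniform hypergraph to obtain $E^+$, {\sc PartialEncoder} with $\theta=0.3d$ to reconstruct $h$, and the negative-edge learner for $E^-$), with the same $\ge 0.9\rho$ versus $\lesssim(\rho d)^3$ separation that yields the $\rho\ll 1/d^{3/2}$ threshold. You also correctly anticipate the one nontrivial point -- that the unrelated-triple bound is not just $(\rho d)^3$ but has a correction term from pairs of $u,v,s$ sharing a $+1$-parent -- which the paper handles with the codegree/unique-neighbor properties and which shows up there as the extra $O(\rho^2 d)$ term in the analogue of Lemma~\ref{lem:3wise-single-pm}.
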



\paragraph{When weights are real numbers.} We only sketch this and leave the details to the appendix. 
Surprisingly, steps 1, 2 and 3 still work. In the proofs, we have only used the sign of the edge weights -- the magnitude of the edge weights can be arbitrary. This is because the proofs in these steps relies on the unique neighbor property, if some node is on (has value $1$), then its unique positive neighbors at the next level will always be on, no matter how small the positive weights might be. Also notice in PartialEncoder we are only using the support of $E^+$, but not the weights.
%

After Step 3 we have turned the problem of unsupervised learning of the hidden graph to a supervised one in which the outputs are just linear classifiers over the inputs! Thus 
the weights on the edges can be learnt to any desired accuracy.

\section{Correlations in a Multilayer Network}
\label{sec:multilayer}

We now consider multi-layer networks, and show how they can be learnt layerwise using a slight modification of our one-layer algorithm at each layer. At a technical level, the difficulty in the analysis is the following: in single-layer learning, we assumed that the higher layer's assignment is a random $\rho n$-sparse binary vector. In the multilayer network, the assignments in intermediate layers (except for the top layer) do not satisfy this, but we will show that the correlations among them are low enough that we can carry forth the argument. Again for simplicity we describe the algorithm for the model $\mathcal{D}(\ell, \rho_l, \{G_i\})$, in which the edge weights are $\pm 1$. Also to keep notation simple, we describe how to bound the correlations in bottom-most layer ($h^{(1)}$). It holds almost verbatim for the higher layers. We define $\rho_i$ to be the ``expected'' number of $1$s in the layer $h^{(i)}$. Because of the unique neighbor property, we expect roughly $\rho_l (d/2)$ fraction of $h^{(\ell-1)}$ to be $1$. So also, for subsequent layers, we obtain $\rho_i = \rho_\ell \cdot (d/2)^{\ell-i}$. (We can also think of the above expression as defining $\rho_i$).

\begin{lemma}\label{lem:corrmanylayer}
Consider a network from $\mathcal{D}(\ell, \rho_l, \{G_i\})$. With high probability (over the random graphs between layers) for any two nodes $u,v$ in layer $h^{(1)}$,
\[
\Pr[h^{(1)}_u=h^{(1)}_v=1] \left\{\begin{array}{lr}
\ge \rho_2/2 & \text{if $u,v$ related} \\
\le \rho_2/4 & otherwise
\end{array}\right.
\]
\end{lemma}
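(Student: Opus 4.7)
The plan is to reduce both cases to probabilistic claims about the distribution of $h^{(2)}$, which is not uniformly sparse but (by an inductive analysis over the higher layers) behaves ``pseudo-randomly'' enough. Specifically, I would first establish two auxiliary facts about $h^{(2)}$ that follow by applying the strong unique neighbor property (Property~\ref{prop:unique_neighbor}) layer-by-layer starting from the top: (i) for every fixed node $z$ in layer 2, $\Pr[h^{(2)}_z = 1] = (1\pm o(1))\rho_2$; and (ii) for disjoint sets $T_1, T_2 \subset$ layer 2 of size at most $O(d)$, one has $\Pr[\supp(h^{(2)}) \cap T_1 \ne \emptyset \text{ and } \supp(h^{(2)}) \cap T_2 \ne \emptyset] \le O(\rho_2^2 |T_1|\,|T_2|)$. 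These are the multilayer analogs of the trivial bounds one would get if $h^{(2)}$ were a uniformly random $\rho_2 n$-sparse vector.

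For the related case, let $z$ be a common positive parent of $u$ and $v$ in layer 2. Let $A$ be the event $h^{(2)}_z = 1$ and $B$ the event that no other node of $B_1(u) \cup B_1(v) \setminus \{z\}$ is $1$ in $h^{(2)}$. By (i), $\Pr[A] \ge (1-o(1))\rho_2$, and by (ii) applied to $T_1 = \{z\}$ and $T_2 = B_1(u)\cup B_1(v) \setminus \{z\}$ (of size $O(d)$), $\Pr[A \wedge \neg B] \le O(\rho_2^2 d) = o(\rho_2)$ since $\rho_2 d \ll 1$. When $A \wedge B$ holds, both $u$ and $v$ receive the single positive contribution from $z$ and nothing else, so $h^{(1)}_u = h^{(1)}_v = 1$. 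Thus $\Pr[h^{(1)}_u = h^{(1)}_v = 1] \ge \Pr[A] - \Pr[A \wedge \neg B] \ge \rho_2/2$.

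For the unrelated case, the key observation is that $h^{(1)}_u = 1$ forces $\supp(h^{(2)}) \cap B^+_1(u) \ne \emptyset$: otherwise every positive parent of $u$ contributes $0$ and the weighted sum into $u$ is at most $0$, giving $h^{(1)}_u = 0$. Similarly $h^{(1)}_v = 1$ requires $\supp(h^{(2)}) \cap B^+_1(v) \ne \emptyset$. Since $u,v$ are unrelated, $B^+_1(u)$ and $B^+_1(v)$ are disjoint, so by fact (ii),
\[
\Pr[h^{(1)}_u = h^{(1)}_v = 1] \le \Pr\bigl[\supp(h^{(2)}) \cap B^+_1(u) \ne \emptyset,\ \supp(h^{(2)}) \cap B^+_1(v) \ne \emptyset\bigr] \le O(\rho_2^2 d^2).
\]
Under the working sparsity assumption $\rho_1 \ll 1/d^2$, i.e.\ $\rho_2 d^2 = 2\rho_1 d \ll 1$, this is at most $\rho_2/4$.

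The main technical obstacle is establishing the pseudo-randomness facts (i) and (ii) for $h^{(2)}$. I would prove them by induction from the top layer: assuming an analogous fact at layer $i+1$, one uses the strong unique-neighbor property of $G_i$ to convert $\Pr[h^{(i+1)}_z = 1]$ estimates and joint-hitting estimates into the corresponding ones for $h^{(i)}$, showing that a node in layer $i$ is ``on'' essentially iff at least one of its positive parents in layer $i+1$ is on (the negative parents and collisions contribute only $o(1)$ corrections). Propagating these bounds down through $\ell - 2$ layers, while tracking the constants carefully so the $o(1)$ losses stay $o(1)$, is the work-intensive step; everything else in the lemma is a short calculation built on top of it.
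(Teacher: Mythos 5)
Your high-level plan -- establish pseudo-randomness facts about $h^{(2)}$ by propagating from the top layer, then reduce both cases to short calculations -- is the same one the paper uses, and your argument for the related case matches the paper's essentially verbatim. The unrelated case, however, contains a genuine gap. Your fact (ii) asserts that for disjoint sets $T_1,T_2$ in layer $2$ of size $O(d)$ one has $\Pr[\supp(h^{(2)})\cap T_1\ne\emptyset,\ \supp(h^{(2)})\cap T_2\ne\emptyset]\le O(\rho_2^2|T_1||T_2|)$, i.e.\ the bound you would get if $h^{(2)}$ were a uniformly random $\rho_2 n$-sparse vector. That bound is \emph{false} in this model. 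Even when $T_1$ and $T_2$ are disjoint in layer $2$, they can share a common ancestor $w$ in layer $3$ that has all-$(+1)$ paths to a node of $T_1$ and a node of $T_2$; when $w$ is on (probability $\rho_3$) and no negative contributions cancel, both $T_1$ and $T_2$ are hit. So the joint-hitting probability has an additive term of order $\rho_3$ per such shared ancestor, which is \emph{not} absorbed in $O(\rho_2^2|T_1||T_2|)$. Concretely, with $\rho_1=0.1/d^2$ (near the threshold for the pairwise algorithm) one gets $\rho_2^2 d^2 = 0.04/d^4$ while $\rho_3 = 0.4/d^4$, so the shared-ancestor term dominates by a factor of $10$. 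Because the lemma quantifies over \emph{all} pairs $u,v$, some unrelated pairs will indeed have such shared ancestors at layer $3$ in a random graph, so the offending configurations cannot be ignored.

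The paper's proof handles exactly this: it bounds the unrelated-pair probability by $2\rho_1^2$ \emph{plus} a second term ``which depends on whether $u,v$ have a common parent in layer $h^{(3)}$ in the graph restricted to $+1$ edges,'' and then shows that under the parameter regime this additional term is $<\rho_2/10$, so the total stays below $\rho_2/4$. You do flag that proving fact (ii) is the hard step, but you do not anticipate that the inequality as stated needs a correction term for higher-layer common ancestors -- and since that term can exceed your claimed bound, the reduction as written would fail. To repair the argument you would need to replace your fact (ii) with a statement that explicitly includes the shared-$(+1)$-ancestor contribution (roughly $\rho_3\cdot|\{\text{common }+1\text{-ancestors of }T_1,T_2\text{ at layer }3\}|$, plus analogous terms from still higher layers) and then verify, using the random-graph properties, that for every pair $u,v$ the total stays below $\rho_2/4$.
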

\begin{proof}(outline) The first step is to show that for a vertex $u$ in level $i$, $\Pr[h^{(i)}(u)=1]$ is at least $3\rho_i/4$ and at most $5\rho_i/4$. This is shown by an inductive argument (details in the full version). (This is the step where we crucially use the randomness of the underlying graph.)

Now suppose $u, v$ have a common neighbor $z$ with $+1$ edges to both of them. Consider the event that $z$ is $1$ and none of the neighbors of $u,v$ with $-1$ weight edges are $1$ in layer $h^{(2)}$. These conditions ensure that $h^{(1)}(u) = h^{(1)}(v)=1$; further, they turn out to occur together with probability at least $\rho_2/2$, because of the bound from the first step, along with the fact that $u,v$ combined have only $2d$ neighbors (and $2d \rho_2 n \ll n$), so there is good probability of not picking neighbors with $-1$ edges.

If $u,v$ are not related, it turns out that the probability of interest is at most $2\rho_1^2$ plus a term which depends on whether $u,v$ have a common parent in layer $h^{(3)}$ in the graph restricted to $+1$ edges. Intuitively, picking one of these common parents could result in $u,v$ both being $1$. By our choice of parameters, we will have $\rho_1^2 < \rho_2/20$, and also the additional term will be $<\rho_2/10$, which implies the desired conclusion.  
\end{proof}

Then as before, we can use graph recovery to find all the $+1$ edges in the graph at the bottom most layer. This can then be used (as in Step 3) in the single layer algorithm to encode $h^{(1)}$ and obtain values for $h^{(2)}$. Now as before, we have many pairs $(h^{(2)}, h^{(1)})$, and thus using precisely the reasoning of Step 4 earlier, we can obtain the full graph at the bottom layer.

This argument can be repeated after `peeling off' the bottom layer, thus allowing us to learn layer by layer.

\section{Graph Recovery}\label{sec:graph_recovery}


Graph reconstruction consists of recovering a  graph given information about its
subgraphs~\cite{bondy1977graph}. A prototypical problem is the {\em Graph Square Root} problem, which 
calls for recovering a graph given all pairs of nodes whose distance is at most $2$. This is
NP-hard.  

\begin{Def}[Graph Recovery]\label{def:graphrecovery}
Let $G_1(U, V, E_1)$ be an unknown random bipartite graph between $|U| = n$ and $|V| = n$ vertices where each edge is picked with probability $d/n$ independently. 

\noindent {\em Given:} Graph $G(V, E)$ where $(v_1,v_2) \in E$ iff $v_1$ and $v_2$ share a common parent in $G_1$ (i.e. $\exists u\in U$ where $(u,v_1)\in E_1$ and $(u, v_2)\in E_1$).

\noindent {\em Goal:} Find the bipartite graph $G_1$.
\end{Def}

Some of our algorithms (using $3$-wise correlations) need to solve analogous problem where we are given triples of nodes which are mutually at distance $2$ from each other, 
which we will not detail for lack of space.

We let $F(S)$ (resp. $B(S)$) denote the set of neighbors of $S \subseteq U$ (resp. $\subseteq V$) in $G_1$. Also $\Gamma(\cdot)$ gives the set of neighbors in $G$. Now for the recovery algorithm to work, we need the following properties (all satisfied whp by random graph when $d^3/n \ll 1$):

\begin{enumerate}
\item For any $v_1,v_2\in V$, \newline $\card{(\Gamma(v_1)\cap \Gamma(v_2)) \backslash (F(B(v_1)\cap B(v_2)))} < d/20$.
\item For any $u_1, u_2 \in U$, $\card{F(u_1)\cup F(u_2)} > 1.5 d$.
\item For any $u\in U$, $v\in V$ and $v\not\in F(u)$, $\card{\Gamma(v)\cap F(u)} < d/20$.
\item For any $u \in U$, at least $0.1$ fraction of pairs $v_1,v_2\in F(u)$ does not have a common neighbor other than $u$.
\end{enumerate}

The first property says ``most correlations are generated by common cause'': all but possibly $d/20$ of the common neighbors of $v_1$ and $v_2$ in $G$, are in fact neighbors of a common neighbor of $v_1$ and $v_2$ in $G_1$. 

The second property basically says the sets $F(u)$'s should be almost disjoint, this is clear because the sets are chosen at random.

The third property says if a vertex $v$ is not related to the cause $u$, then it cannot have correlation with all many neighbors of $u$.

The fourth property says every cause introduces a significant number of correlations that is unique to that cause.

In fact, Properties 2-4 are closely related from the unique neighbor property.

\begin{lemma}
When graph $G_1$ satisfies Properties 1-4, Algorithm~\ref{alg:community_finding_simple} successfully recovers the graph $G_1$ in expected time $O(n^2)$.
\label{lem:graphrecoverdeterministic}
\end{lemma}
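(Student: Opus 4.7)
The plan is to argue that from the observed correlation graph $G$, each neighborhood $F(u)$ for $u \in U$ can be reconstructed exactly, that distinct causes yield distinct reconstructed sets, and that the whole procedure runs in expected time $O(n^2)$.

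First I would show how to produce a good \emph{approximate} version of $F(u)$ from a single well-chosen pair. For any $u \in U$, Property 4 guarantees that a constant fraction of pairs $v_1, v_2 \in F(u)$ satisfy $B(v_1) \cap B(v_2) = \{u\}$; call such a pair a \emph{clean pair} for $u$. For any clean pair we have $F(B(v_1)\cap B(v_2)) = F(u)$, so Property~1 immediately yields $|(\Gamma(v_1) \cap \Gamma(v_2)) \setminus F(u)| \le d/20$, while the reverse containment $F(u) \subseteq \Gamma(v_1) \cap \Gamma(v_2)$ is immediate since every $w \in F(u)$ is $G$-adjacent to both $v_1$ and $v_2$ via the common parent $u$. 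Thus the set $S := \Gamma(v_1) \cap \Gamma(v_2)$ differs from $F(u)$ by at most $d/20$ vertices.

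Next I would sharpen $S$ into exactly $F(u)$ by a single thresholding pass: for each $v \in V$, include $v$ in the refined set iff $|\Gamma(v) \cap S| > d/2$. Correctness is immediate from Properties 2--3. If $v \in F(u)$, then $v$ and every other $w \in F(u)$ share the parent $u$, so they are $G$-adjacent, giving $|\Gamma(v) \cap S| \ge |F(u)\cap S| - 1 \ge d - d/20 - 1 > d/2$ (using that $|F(u)| \approx d$ by standard concentration). Conversely, if $v \notin F(u)$, Property~3 gives $|\Gamma(v) \cap F(u)| < d/20$, so $|\Gamma(v) \cap S| \le |\Gamma(v) \cap F(u)| + |S \setminus F(u)| < d/10 < d/2$. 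The refined set therefore equals $F(u)$ exactly. To ensure different causes produce different outputs I would invoke Property~2: for $u_1 \ne u_2$ we have $|F(u_1) \cup F(u_2)| > 1.5d$ and each $|F(u_i)| \approx d$, so $|F(u_1) \triangle F(u_2)| > d/2 > 0$, making each recovered set attributable to a unique $u$. Reinstating vertices of $U$ and the edges $\{(u,v) : v \in F(u)\}$ reconstructs $G_1$.

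Finally I would bound the expected running time. The algorithm samples edges $(v_1, v_2)$ from $E$ uniformly at random, runs the intersect-and-refine subroutine, and stores the refined set as a new community $F(u)$ if it has not been seen before. The total number of edges in $G$ is $\Theta(nd^2)$, while Property~4 ensures $\Omega(d^2)$ clean pairs exist for each of the $n$ causes, so a random edge is clean for a particular uncovered $u$ with probability $\Omega(1/n)$. A coupon-collector argument then shows all $n$ communities are found after $O(n \log n)$ samples in expectation, and with adjacency lists suitably hashed the per-sample cost of computing $\Gamma(v_1) \cap \Gamma(v_2)$ and refining can be amortized so that the total work is $O(n^2)$.

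The main obstacle I anticipate is not the correctness of the reconstruction---which flows cleanly from Properties 1--4 once the notion of a clean pair is isolated---but the expected-time bound. Naive implementations of the intersect-and-refine step can blow up to $\tilde O(n d^2)$ per sample, so achieving $O(n^2)$ total will require a careful amortized analysis and appropriate data structures (e.g., sorted adjacency lists and fingerprinting of recovered communities to detect duplicates in $O(d)$ time).
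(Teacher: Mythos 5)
Your core reconstruction argument---that for a \emph{clean pair} $(v_1,v_2)$ with $B(v_1)\cap B(v_2)=\{u\}$, Property~1 makes $S=\Gamma(v_1)\cap\Gamma(v_2)$ a small superset of $F(u)$, and then a degree-based threshold using Properties~2 and~3 carves out $F(u)$ exactly---is the same as the paper's, modulo the threshold constant ($d/2$ versus the algorithm's $0.8d-1$, both of which land in the valid window since $d/10 < 0.8d-1 < d-d/20-1$). The running-time reasoning also reaches the same conclusion, though the paper argues a constant expected number of iterations per discovered $u$ (giving $O(n)$ iterations total) rather than a coupon-collector $O(n\log n)$; with your bound you would only get $O(n^2\log n)$ without further work.

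The genuine gap is that you never analyze what the algorithm does when it picks a \emph{non-clean} edge, i.e.\ one with $|B(v_1)\cap B(v_2)|\ge 2$. The algorithm as written contains the guard ``\textbf{if} $|S|<1.3d$'' precisely for this case: by Property~2, if two distinct parents $u_1,u_2 \in B(v_1)\cap B(v_2)$ exist, then $F(u_1)\cup F(u_2)\subseteq S$ (since every $w\in F(u_i)$ is $G$-adjacent to both $v_1$ and $v_2$ via $u_i$), so $|S|\ge 1.5d$ and the algorithm rejects the sample; conversely, for a clean pair, $S\subseteq F(u)\cup T$ with $|T|\le d/20$, so $|S|\le 1.2d+d/20<1.3d$ and the sample is accepted. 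Your proof silently assumes only clean pairs are ever processed (and speaks of storing a community ``if it has not been seen before,'' a dedup step the algorithm does not perform). Without the size guard analysis you cannot rule out the algorithm creating spurious vertices $u$ in $G_1$ when it happens to draw a multi-cause edge, which is the main thing Property~2 is there to prevent.
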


\begin{proof}
We first show that when $(v_1,v_2)$ has more than one unique common cause, then the condition in the if statement must be false. This follows from Property 2. We know the set $S$ contains $F(B(v_1)\cap B(v_2))$. If $\card{B(v_1)\cap B(v_2)} \ge 2$ then Property 2 says $\card{S} \ge 1.5d$, which implies the condition in the if statement is false.

Then we show if $(v_1,v_2)$ has a unique common cause $u$, then $S'$ will be equal to $F(u)$. By Property 1, we know $S = F(u) \cup T$ where $\card{T} \le d/20$.

For any vertex $v$ in $F(u)$, it is connected to every other vertex in $F(u)$. Therefore $\card{\Gamma(v)\cap S} \ge \card{\Gamma(v)\cap F(u)} \ge 0.8d-1$, and $v$ must be in $S'$.

For any vertex $v'$ outside $F(u)$, by Property 3 it can only be connected to $d/20$ vertices in $F(u)$. Therefore $\card{\Gamma(v)\cap S} \le \card{\Gamma(v)\cap F(u)} + |T| \le d/10$. Hence $v'$ is not in $S'$.

Following these arguments, $S'$ must be equal to $F(u)$, and the algorithm successfully learns the edges related to $u$.

The algorithm will successfully find all vertices $u\in U$ because of Property 4: for every $u$ there are enough number of edges in $G$ that is only caused by $u$. When one of them is sampled, the algorithm successfully learns the vertex $u$.

Finally we bound the running time. By Property 4 we know that the algorithm identifies a new vertex $u\in U$ in at most $10$ iterations in expectation. Each iteration takes at most $O(n)$ time. Therefore the algorithm takes at most $O(n^2)$ time in expectation.
\end{proof}

%

\begin{algorithm}
\caption{RecoverGraph}\label{alg:community_finding_simple}
\begin{algorithmic}
\REQUIRE $G$ given as in Definition~\ref{def:graphrecovery}
\ENSURE Find the graph $G_1$ as in Definition~\ref{def:graphrecovery}.
\REPEAT
\STATE Pick a random edge $(v_1,v_2) \in E$.
\STATE Let $S = \{v : (v,v_1), (v,v_2) \in E \}$. 
\IF {$\card{S} < 1.3d$}
\STATE $S' = \{v\in S : \card{\Gamma(v) \cap S} \ge 0.8d-1\}$
\COMMENT{$S'$ should be a clique in $G$}
\STATE In $G_1$, create a vertex $u$ and connect $u$ to every $v\in S'$.
\STATE Mark all the edges $(v_1,v_2)$ for $v_1,v_2\in S'$.
\ENDIF
\UNTIL{all edges are marked}
\end{algorithmic}
\end{algorithm}

\section{Learning the lowermost (real-valued) layer}
\label{sec:lastlayer}


Note that in our model, the lowest (observed) layer is real-valued and does not have threshold gates.
Thus our earlier learning algorithm cannot be applied as is. However, we see that the same paradigm -- identifying correlations and using {\sc Graph recover} -- can be used.


The first step is to show that for a random weighted graph $G$, the linear decoder $D(h) = Gh$ and the encoder $E(y) = \sgn(G^Ty + b)$ form a denoising autoencoder with real-valued outputs, as in Bengio et al.~\cite{DBLP:journals/pami/BengioCV13}. 

\begin{lemma}
If $G$ is a random weighted graph, the encoder $E(y) = \sgn(G^T y-0.4d\vec{1})$ and linear decoder $D(h) = Gh$ form a denoising autoencoder, for noise vectors $\gamma$ which have independent components, each having variance at most $O(d/\log^2 n)$.
%
\end{lemma}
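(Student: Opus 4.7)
The plan is to verify that $E(D(h) + \gamma) = \sgn(G^T G h + G^T \gamma - 0.4 d \vec{1})$ equals $h$ coordinate by coordinate with high probability over the random graph $G$, the hidden assignment $h$, and the noise $\gamma$. Fixing a hidden node $u$, I would decompose the quantity inside the sign as
\[
\|G_{\cdot u}\|^2 \, h_u \;+\; \sum_{w \in \supp(h)\setminus\{u\}} \langle G_{\cdot u}, G_{\cdot w}\rangle \;+\; (G^T\gamma)_u,
\]
and control each term separately so that the first (the \emph{diagonal} contribution) dominates the other two (\emph{cross-correlation} and \emph{noise}) by a margin comfortably exceeding the threshold $0.4 d$.

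For the diagonal term, $G_{\cdot u}$ has $d(1\pm o(1))$ nonzero entries drawn i.i.d.\ from $[-1,1]$, so Bernstein's inequality gives $\|G_{\cdot u}\|^2 = \Theta(d)$ with a union bound over all $n$ hidden units. For the noise term, writing $(G^T\gamma)_u = \sum_v G_{vu}\gamma_v$ and assuming sub-Gaussian $\gamma_v$ with variance $O(d/\log^2 n)$, the sum is sub-Gaussian with variance proxy $\|G_{\cdot u}\|^2 \cdot O(d/\log^2 n) = O(d^2/\log^2 n)$. Hence $|(G^T\gamma)_u| = O(d/\sqrt{\log n}) = o(d)$ with probability $1 - n^{-\omega(1)}$, again uniformly in $u$.

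The heart of the argument, and the main obstacle, is the cross-correlation sum. The individual inner products $\langle G_{\cdot u}, G_{\cdot w}\rangle = \sum_{v \in F(u)\cap F(w)} G_{vu}G_{vw}$ are mean-zero over the random weights (since the weights are symmetric about $0$), but a naive triangle-inequality bound gives $|(G^T G)_{uw}| = O(d^2/n)$ per pair, and summing $\rho n$ such terms yields $O(\rho d^2)$, which is of order $d$ in the regime $\rho \approx 1/d$ relevant to the last layer. To get the sharper bound, I would rewrite the whole cross sum as $\sum_{v \in F(u)} G_{vu}\, A_v$ where $A_v = \sum_{w \in \supp(h)\cap F(v),\, w\neq u} G_{vw}$ is independent of the weights $\{G_{vu}\}_{v \in F(u)}$. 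Conditioning on everything but these weights, the sum is a linear combination of independent bounded random variables with variance proxy $\tfrac{1}{3}\sum_{v \in F(u)} A_v^2$. Using the random-graph properties (each $v \in F(u)$ has $|\supp(h)\cap F(v)| = O(\rho d)$ neighbors in the support in expectation, with standard tail bounds giving uniform control), I get $\sum_v A_v^2 = O(d)$ and therefore the cross term is $O(\sqrt{d \log n}) = o(d)$ by Hoeffding plus a union bound over $u$.

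Putting the three bounds together: when $h_u = 1$ the expression is $\Theta(d) \pm o(d)$, which exceeds $0.4d$; when $h_u = 0$ only the cross and noise contributions remain, whose combined magnitude is $o(d) < 0.4d$. Hence $\sgn$ returns exactly $h_u$ at every coordinate, establishing the autoencoder property. The main subtlety, as noted, is resisting the temptation to bound $|\langle G_{\cdot u}, G_{\cdot w}\rangle|$ term-by-term; exploiting the zero-mean randomness of the weights via the rearrangement above (and the resulting sub-Gaussian concentration) is what converts a trivial $O(\rho d^2)$ bound into the required $o(d)$ bound.
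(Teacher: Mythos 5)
Your decomposition of $(G^T(Gh+\gamma))_u$ into a diagonal term $\|G_{\cdot u}\|^2 h_u$, a cross-correlation term, and a noise term, together with the key move of rewriting the cross term as $\sum_{v\in F(u)} G_{vu} A_v$ with the $A_v$ independent of the weights $\{G_{vu}\}$ so that conditional Hoeffding/sub-Gaussian concentration gives an $O(\sqrt{d\log n})=o(d)$ bound rather than the naive $O(\rho d^2)$, is precisely the argument the paper's proof sketch uses (there the $A_v$ are written as $z_i$). One small caveat you inherit from the source rather than introduce: for weights uniform in $[-1,1]$ the diagonal sum concentrates near $d/3$, not $d/2$ as the paper's sketch claims, so the threshold constant $0.4d$ in the lemma statement needs adjustment; this does not affect the structure of either proof.
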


The next step is to show a bound on correlations as before. For simplicity we state it assuming the 
layer $h^{(1)}$ has a random $0/1$ assignment of sparsity $\rho_1$. In the full version we state it
keeping in mind the higher layers, as we did in the previous sections.

\begin{theorem}\label{thm:last_layer_correlation_3wise_simple}
When $\rho_1 d = O(1)$, $d = \Omega(\log^2 n)$, 
with high probability over the choice of the weights and the choice of the graph, for any three nodes $u,v,s$ the assignment $y$ to the bottom layer satisfies:
\begin{enumerate}
	\item If $u,v$ and $s$ have no common neighbor, then $|\Exp_{h}[y_uy_vy_s]|\le \rho_1/3$
	\item If $u,v$ and $s$ have a unique common neighbor, then $|\Exp_{h}[y_uy_vy_s]| \ge 2\rho_1/3$
\end{enumerate}
\end{theorem}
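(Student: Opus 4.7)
The plan is to expand $\Exp_h[y_uy_vy_s]$ in moments of $h$. Since $y_a=\sum_w G_{aw}h_w$, linearity of expectation gives
\begin{equation*}
\Exp_h[y_uy_vy_s]=\sum_{w_1,w_2,w_3}G_{uw_1}G_{vw_2}G_{sw_3}\,\Exp_h[h_{w_1}h_{w_2}h_{w_3}].
\end{equation*}
Because $h$ is uniformly $\rho_1 n$-sparse, the joint moment equals $\rho_1$ when $w_1=w_2=w_3$, approximately $\rho_1^2$ when exactly two of the indices coincide, and approximately $\rho_1^3$ when all three are distinct. So the expectation splits as a leading term of order $\rho_1$ plus error terms of orders $\rho_1^2$ and $\rho_1^3$. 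The leading term equals $\rho_1\sum_{w\in F(u)\cap F(v)\cap F(s)}G_{uw}G_{vw}G_{sw}$: it vanishes identically when $u,v,s$ have no common neighbor, and equals $\pm\rho_1$ (namely $\rho_1G_{uz}G_{vz}G_{sz}=\pm\rho_1$ in the $\pm1$-weight model) when they have a unique common neighbor $z$. Thus the lemma reduces to showing that the sum of the order-$\rho_1^2$ and order-$\rho_1^3$ contributions has absolute value at most $\rho_1/3$, uniformly over triples.

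To bound the error I would rewrite the ``exactly two equal'' and ``all distinct'' sums via inclusion-exclusion in terms of the scalar quantities $A_a:=\sum_w G_{aw}$ and $B_{ab}:=\sum_w G_{aw}G_{bw}$, plus small diagonal corrections. The dominant error terms are then $\rho_1^3 A_uA_vA_s$ and $\rho_1^2(A_sB_{uv}+A_uB_{vs}+A_vB_{us})$. Concentration for the independent random $\pm1$ weights gives $|A_a|=O(\sqrt{d\log n})$ with probability $1-n^{-\Omega(1)}$, while for distinct $a,b$ the sum $B_{ab}$ ranges over $|F(a)\cap F(b)|=O(d^2/n)+O(\log n)$ common neighbors and is itself a sum of independent signs, so $|B_{ab}|=O(\sqrt{(d^2/n+\log n)\log n})$ whp. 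Substituting $\rho_1=\Theta(1/d)$ and $d=\Omega(\log^2 n)$, each error contribution is of order $\rho_1\cdot\operatorname{polylog}(n)/\sqrt d$, which is $o(\rho_1)$ and hence well below $\rho_1/3$.

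The main obstacle I anticipate is making the concentration bounds simultaneously good enough and uniform enough over all $\Theta(n^3)$ triples. The saving grace is that after inclusion-exclusion the error depends on $u,v,s$ only through the one-point quantities $\{A_a\}$ and the two-point quantities $\{B_{ab}\}$: a union bound over $n$ node-events and $n^2$ pair-events is enough, so the per-event failure probability need only be $n^{-\Omega(1)}$, which is comfortably afforded by $d=\Omega(\log^2 n)$. A secondary subtlety is that in the real-valued weight model $\mathcal{R}$ the signal $G_{uz}G_{vz}G_{sz}$ can be arbitrarily small, so the clean $\pm\rho_1$ lower bound in case~2 must be replaced by a second-moment version; this refinement is deferred to the full statement in the appendix and is not needed for the simplified form above.
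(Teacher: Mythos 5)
Your overall strategy is the same as the paper's: expand $\Exp_h[y_uy_vy_s]$ as a trilinear form $\sum_{i,j,k} G_{ui}G_{vj}G_{sk}\Exp[h_ih_jh_k]$ in the independent random weights, isolate the $\rho_1$-order diagonal term coming from common neighbors, and show the remaining contribution concentrates. The only organizational difference is that you split the trilinear form into the one-point sums $A_a$ and two-point sums $B_{ab}$ and bound those separately, whereas the paper applies a generalized Hoeffding / Hanson--Wright-type bound directly to the whole multilinear chaos; the resulting bounds agree in order of magnitude, so this is a cosmetic distinction. Your remark that the clean $\pm\rho_1$ signal in case~2 requires a second-moment refinement in the real-valued weight model is also a fair observation.

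However, the final quantitative step does not close. You assert that each error term is ``of order $\rho_1\cdot\operatorname{polylog}(n)/\sqrt d$, which is $o(\rho_1)$ and hence well below $\rho_1/3$.'' Tracking the polylog explicitly: the dominant error is $\rho_1^3|A_uA_vA_s|$, and the worst-case bound over all $\Theta(n^3)$ triples (equivalently the cube of $\max_a|A_a|=O(\sqrt{d\log n})$) is $O\bigl(\rho_1^3(d\log n)^{3/2}\bigr)$. Dividing by $\rho_1$ and using $\rho_1=\Theta(1/d)$, the ratio is $\Theta\bigl((\log n)^{3/2}/\sqrt d\bigr)$. Under the stated hypothesis $d=\Omega(\log^2 n)$ this ratio can be as large as $\Theta(\sqrt{\log n})$, which grows without bound rather than tending to zero, so the claimed inequality $|\Exp[y_uy_vy_s]|\le\rho_1/3$ does not follow. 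One genuinely needs $d\gtrsim(\log n)^{3}$ for this argument to bound the error below $\rho_1/3$ (and the same arithmetic applies to the Hanson--Wright route, whose tail for a degree-3 chaos also carries a $(\log n)^{3/2}$ factor). Either the hypothesis in the theorem must be strengthened to $d=\Omega(\log^3 n)$, or one must extract some additional cancellation (e.g., a sharper control of the trilinear tail, or restricting attention to a typical rather than worst-case triple) that you have not supplied. As written, this gap leaves the estimate one polylog factor short.
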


\section{Two layers cannot be represented by one layer}
\label{sec:lowerbounds}

In this section we show that a two-layer network with $\pm 1$ weights is more expressive than one layer network with arbitrary weights. A two-layer network $(G_1,G_2)$ consists of random graphs $G_1$ and $G_2$ with random $\pm 1$ weights on the edges. Viewed as a generative model,
its input is $h^{(3)}$ and the output is $h^{(1)}=\sgn(G_1\sgn(G_2h^{(3)}))$. We will show that
a single-layer network even with arbitrary weights and arbitrary threshold functions must generate a fairly different distribution.

\begin{lemma}\label{lem:2layervs1}
For almost all choices of $(G_1,G_2)$, the following is true. For every one layer network  with matrix $A$ and vector $b$, if $h^{(3)}$  is chosen to be a random $\rho_3 n$-sparse vector with $\rho_3d_2d_1\ll 1$, the probability (over the choice of
$h^{(3)}$) is at least $\Omega(\rho_3^2)$ that $\sgn(G_1\sgn(G_1h^{(3)}))\neq \sgn(Ah^{(3)}+ b)$. 
\end{lemma}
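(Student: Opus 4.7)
My plan rests on an algebraic rigidity of single-layer threshold functions that the two-layer network systematically violates on pairs of input coordinates.

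\textbf{Step 1 (XOR impossibility for single-layer).} Fix any $(A,b)$, any output coordinate $i$, any pair $(j_1, j_2)$, and any background $h_-$ supported in $[n] \setminus \{j_1, j_2\}$. Setting $c = \sum_{j \ne j_1, j_2} A_{ij}(h_-)_j + b_i$, $x = A_{ij_1}$, $y = A_{ij_2}$, I claim the four signs
\[
\big(\sgn(c),\ \sgn(c+x),\ \sgn(c+y),\ \sgn(c+x+y)\big)
\]
cannot form the XOR pattern $(0,1,1,0)$: if $c\le 0$ and $c+x, c+y > 0$, then $x, y > -c \ge 0$, forcing $c+x+y > -c \ge 0$. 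So if a coordinate of $f_A$ is off on $h_-$ but turns on whenever $j_1$ or $j_2$ is added alone, then adding both together must also turn it on.

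\textbf{Step 2 (two-layer realizes XOR).} For almost every $(G_1, G_2)$, I would show that a nontrivial fraction of tuples $(j_1, j_2, h_-, i)$ triggers the XOR pattern $(0,1,1,0)$ at $f_2$. The mechanism is middle-layer cancellation: a node $k$ in layer $h^{(2)}$ with $G_2(k, j_1) = +1$, $G_2(k, j_2) = -1$, whose $G_2$-contribution from $h_-$ places it in a boundary configuration, will fire under $h_- + e_{j_1}$ but fall below threshold under $h_- + e_{j_1} + e_{j_2}$. By accumulating enough such cancellers linked through positive $G_1$-edges to a single output $i$, the pre-threshold sum at $i$ drops from positive to nonpositive as $j_2$ is added. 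Quantifying this requires (i) counting cancellers under the joint randomness of $G_1, G_2$ (exploiting that $h_-$ has $\Theta(\rho_3 n)$ support and therefore leaves many middle nodes at the $0/1$ boundary), and (ii) an anti-concentration argument on the random $G_1$-weights to ensure the pre-threshold sum at $i$ can indeed be ``just above'' threshold.

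\textbf{Step 3 (matching to the sparse input distribution).} Steps 1 and 2 together imply that for any single-layer $(A,b)$, disagreement must occur at some input in every XOR quadruple $\{h_-, h_- + e_{j_1}, h_- + e_{j_2}, h_- + e_{j_1} + e_{j_2}\}$. I would choose $|h_-| = \rho_3 n - 2$ so that the last element of the quadruple lies in the support of the $\rho_3 n$-sparse distribution; the event that a random $h^{(3)}$ coincides with this element (for a specific XOR-triggering pair) has probability $\Omega(\rho_3^2)$. Summing over $\Omega(n^2)$ XOR-triggering pairs yielded by Step 2, with suitable disjointness between disagreement events, gives the claimed $\Omega(\rho_3^2)$ lower bound.

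\textbf{Main obstacle.} The heart of the proof is Step 2, and a secondary subtlety arises in Step 3. In the sparse regime $d_2 = n^{\gamma}$ with $\gamma \le 0.2$, the expected number of ``direct cancellers'' per isolated pair is only $O(d_2^2/n) \ll 1$, so the argument cannot rely on cancellations between $e_{j_1}$ and $e_{j_2}$ in isolation; one must exploit cancellers arising from the interaction of $j_2$ with the $\Theta(\rho_3 n)$-sized background $h_-$. Controlling both the number of such cancellers and their net effect on the $G_1$-weighted sum at output $i$ needs delicate concentration and anti-concentration estimates. The secondary subtlety is that the XOR quadruple mixes inputs of different sparsity levels ($\rho_3 n - 2$, $\rho_3 n - 1$, $\rho_3 n$), so one must argue that for a large subclass of XOR tuples the \emph{disagreement specifically lands on the $\rho_3 n$-sized element}, rather than one of the smaller ones which lie outside the target distribution. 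Averaging over permutations of the quadruple roles, or equivalently over many pairs $(j_1, j_2)$ incident to each output, should recover the bound but requires careful combinatorial accounting.
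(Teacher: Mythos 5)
Your Step~1 (the ``XOR impossibility'' for a single threshold gate) is exactly the right algebraic obstruction, and it is the same observation the paper uses. But your Steps~2 and~3 diverge from the paper, and they contain gaps that the paper's construction avoids.

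The paper does not try to realize a Boolean-cube XOR around a generic background $h_-$ via accumulated cancellations. Instead, it exhibits a tiny explicit gadget localized in the random graph: pick two middle-layer nodes $s_1, s_2$ that both feed $v$ with $+1$ weight in $G_1$, pick $u_1, u_2$ with $G_2$-weights $+1,-1$ into $s_1$, and $u_3, u_4$ with weights $+1,-1$ into $s_2$. Then, on inputs whose support restricted to the ancestors of $v$ is exactly $\{u_1,u_2\}$, $\{u_3,u_4\}$, $\{u_1,u_4\}$, or $\{u_2,u_3\}$, the output at $v$ is $0,0,1,1$ respectively -- deterministically, with no boundary/anti-concentration issues. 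Summing the four threshold inequalities that $(A,b)$ would have to satisfy at $v$ gives an immediate contradiction, so at least one of these four inputs is a disagreement. Each of the four occurs as the restriction of a random $\rho_3 n$-sparse $h^{(3)}$ with probability $\Omega(\rho_3^2)$ (two prescribed $1$'s among the $\le d_1 d_2$ ancestors, all other ancestors $0$, the latter happening with probability $\ge 0.9$). The existence of such a gadget for almost every $(G_1,G_2)$ is a simple consequence of having $\approx d_1$ middle neighbors of $v$, each with $\approx d_2$ top-layer parents of mixed sign.

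This sidesteps both of the obstacles you (correctly) flag. Your Step~2 is underspecified: you would need to show that, for a dense set of pairs $(j_1,j_2)$ and backgrounds $h_-$, the pre-activation at some output sits delicately on a threshold boundary so that adding $e_{j_2}$ flips it -- precisely the kind of anti-concentration claim that is hard to certify in a fixed (non-re-randomized) graph, and you provide no mechanism for it. And your Step~3 has a structural flaw independent of Step~2: the XOR quadruple $\{h_-, h_-+e_{j_1}, h_-+e_{j_2}, h_-+e_{j_1}+e_{j_2}\}$ mixes sparsity levels $\rho_3 n - 2, \rho_3 n - 1, \rho_3 n$, so even granted that one element must be a disagreement, there is no guarantee it is the $\rho_3 n$-sparse one that the lemma's input distribution actually samples; ``averaging over quadruple roles'' is not worked out and is not obviously sound, because the XOR constraint is asymmetric under those role swaps. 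The paper's gadget has all four disagreement candidates at the \emph{same} sparsity level, so this issue never arises. In short: your key lemma (threshold rigidity) matches the paper's, but your realization strategy is substantially harder than necessary and is not completed; the paper's local four-node gadget is the missing ingredient.
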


The idea is that the cancellations possible in the two-layer network simply cannot all be accomodated in a single-layer
network even using arbitrary weights. More precisely, even the bit at a single output node $v$ cannot be well-represented by a simple threshold function. 


First, observe that the output at $v$ is determined by values of $d_1d_2$ nodes at the top layer
that are its ancestors. It is not hard to show in the one layer net $(A,b)$, there should be no edge between $v$ and any node $u$ that is not its ancestor. Then consider structure in Figure~\ref{fig:left}. Assuming all other parents of $v$ are 0 (which happen with probability at least $0.9$), and focus on the values of $(u_1,u_2,u_3,u_4)$. When these values are $(1,1,0,0)$ and $(0,0,1,1)$, $v$ is off. When these values are $(1,0,0,1)$ and $(0,1,1,0)$, $v$ is on. This is impossible for a one layer network because the first two ask for $\sum_{A_{u_i,v}} + 2b_v \le 0$ and the second two ask for $\sum_{A_{u_i,v}} + 2b_v < 0$.

\begin{figure}[!ht]
\centering
	\includegraphics*[height=1in]{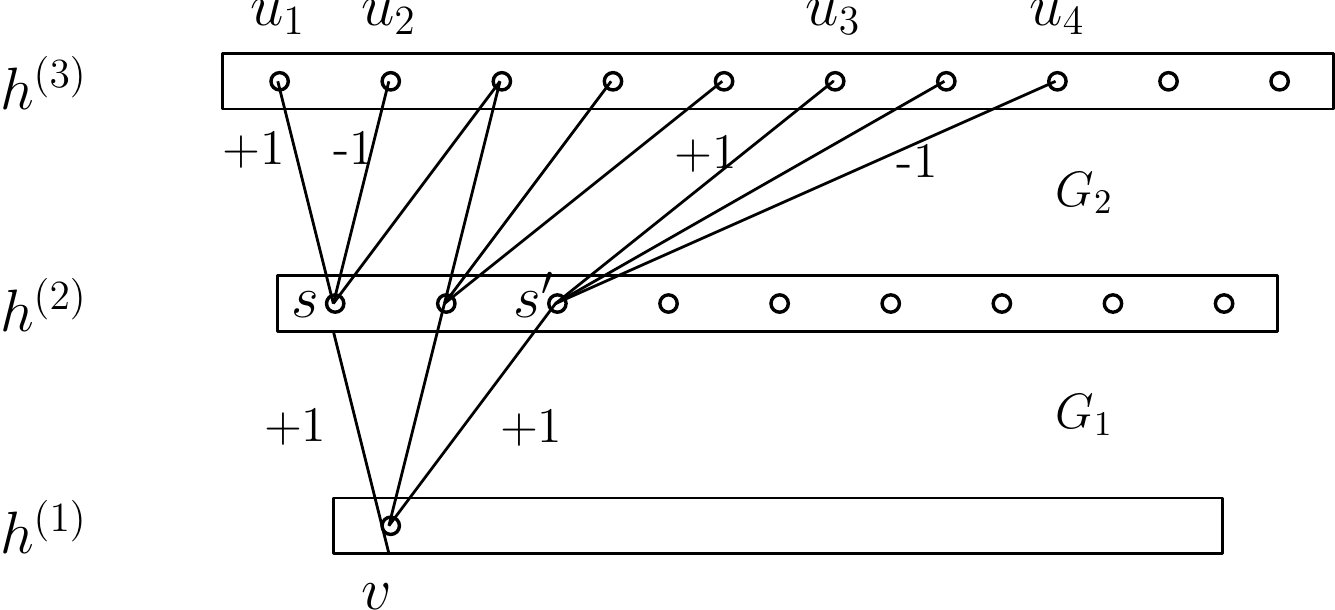}
    \caption{Two-layer network$(G_1,G_2)$}\label{fig:left}
  \end{figure}

\section{Conclusions}
Rigorous analysis of interesting subcases of any ML problem can be beneficial for triggering further improvements: see e.g.,
the role played in Bayes nets
by the rigorous analysis of message-passing algorithms for trees and graphs of low tree-width. 
This is the spirit in which to view our consideration of a random neural net model (though  note that
there is some empirical work in reservoir computing using randomly wired neural nets).

The concept of a denoising autoencoder (with weight tying) suggests to us 
a graph with random-like properties. We would be very interested in an empirical study of the randomness properties
of actual deep nets learnt in real life. (For example, in~\cite{ImageNet} some of the layers use convolution, which is 
decidedly nonrandom. But other layers do backpropagation starting with a complete graph and may end up more random-like.) 

Network randomness is not so crucial for single-layer learning. But for provable layerwise learning we  rely on the 
 support (i.e., nonzero edges) being random: this is crucial for controlling (i.e., upper bounding)
correlations among features appearing in the 
same hidden layer (see Lemma~\ref{lem:corrmanylayer}).
Provable layerwise learning under weaker assumptions would be very interesting.


\section*{Acknowledgments}

We would like to thank Yann LeCun, Ankur Moitra, Sushant Sachdeva, Linpeng Tang for numerous helpful discussions throughout various stages of this work. This work was done when the first, third and fourth authors were visiting EPFL.

\bibliographystyle{alpha}
\bibliography{ref}

\includepdf[pages = {1-36}]{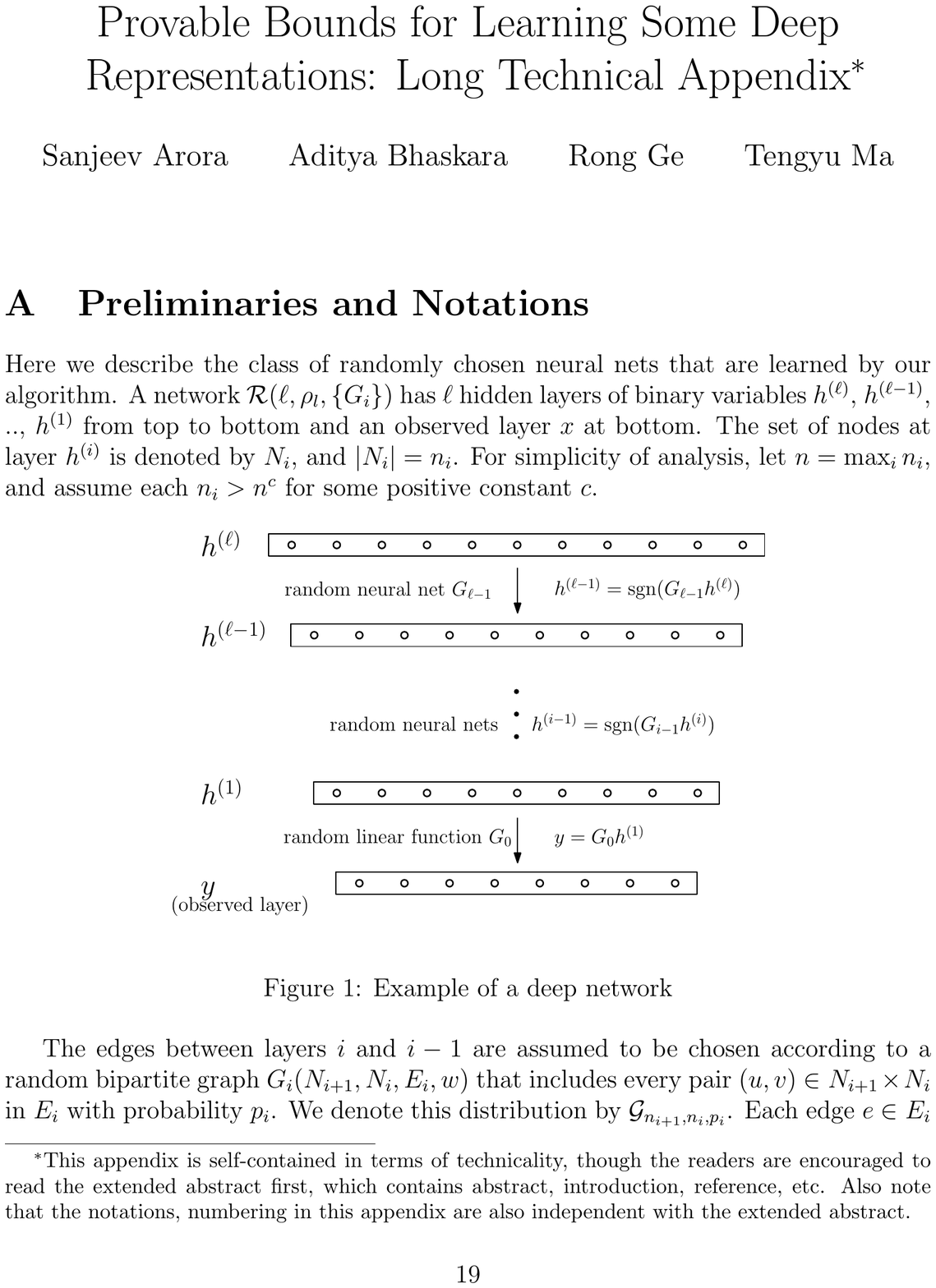}

\end{document}